\def\eqref#1{equation~\ref{#1}}
\def\1{\bm{1}}
\DeclareMathAlphabet{\mathsfit}{\encodingdefault}{\sfdefault}{m}{sl}
\SetMathAlphabet{\mathsfit}{bold}{\encodingdefault}{\sfdefault}{bx}{n}
\title{Locality-Aware Graph Rewiring in GNNs}
\author{\hspace*{-5pt}Federico Barbero\textsuperscript{1}\thanks{Correspondence to federico.barbero@cs.ox.ac.uk.}, Ameya Velingker\textsuperscript{2}, Amin Saberi\textsuperscript{3}, Michael Bronstein\textsuperscript{1}, Francesco Di Giovanni\textsuperscript{1}\\
\hspace*{-5pt}\textsuperscript{1}University of Oxford, Department of Computer Science\\
\hspace*{-5pt}\textsuperscript{2}Google Research\\
\hspace*{-5pt}\textsuperscript{3}Stanford University, Department of Management Science and Engineering\\
\vspace{-20pt}
}
\renewcommand{\ALG@name}{Algorithm}
\algnewcommand\algorithmicforeach{\textbf{for each}}
\definecolor{bluegray}{rgb}{0.4, 0.6, 0.8}
\definecolor{electriclime}{rgb}{0.8, 1.0, 0.0}
\definecolor{malachite}{rgb}{0.04, 0.85, 0.32}
\definecolor{darkred}{rgb}{0.55, 0.0, 0.0}
\definecolor{darkblue}{rgb}{0.0, 0.0, 0.55}
\definecolor{darkgreen}{rgb}{0.0, 0.2, 0.13}
\definecolor{darkorchid}{rgb}{0.6, 0.2, 0.8}
\newtheorem{theorem}{Theorem}[section]
\newtheorem{proposition}[theorem]{Proposition}
\newcommand{\gph}{\mathsf{G}} 
\newcommand{\MPNN}{\text{MPNN}}
\newcommand{\V}{\mathsf{V}}
\newcommand{\E}{\mathsf{E}}
\newcommand{\R}{\mathbb{R}}
\newcommand{\Xb}{\mathbf{X}}
\newcommand{\Ab}{\mathbf{A}}
\newcommand{\pepfunc}{\texttt{Peptides-func}\xspace}
\newcommand{\pepstruct}{\texttt{Peptides-struct}\xspace}
\newcommand{\cmark}{\ding{51}}%
\newcommand{\xmark}{\ding{55}}%
\definecolor{lightgray}{gray}{0.95}
\definecolor{Gray}{gray}{0.85}
\definecolor{LightCyan}{rgb}{0.88,1,1}
\definecolor{LightPink}{HTML}{FCE1EF}
\definecolor{LightGreen}{HTML}{EEF7E1}
\newcolumntype{A}{>{\columncolor{white}}c}
\newcolumntype{B}{>{\columncolor{LightGreen}}c}
\newcolumntype{C}{>{\columncolor{LightPink}}c}
\newcommand{\first}[1]{\textbf{\textcolor{red}{#1}}}
\newcommand{\second}[1]{\textbf{\textcolor{blue}{#1}}}
\newcommand{\third}[1]{\textbf{\textcolor{orange}{#1}}}
\newcommand{\LASER}{\textbf{LASER }}
\begin{document}

\maketitle

\begin{abstract}
Graph Neural Networks (GNNs) are popular models for machine learning on graphs that typically follow the message-passing paradigm, whereby the feature of a node is updated recursively upon aggregating information over its neighbors. While exchanging messages over the input graph endows GNNs with a strong inductive bias, it can also make GNNs susceptible to \emph{over-squashing}, thereby preventing them from capturing long-range interactions in the given graph. To rectify this issue, {\em graph rewiring} techniques have been proposed as a means of improving information flow by altering the graph connectivity. In this work, we identify three desiderata for graph-rewiring: (i) reduce over-squashing, (ii) respect the locality of the graph, and 
(iii) preserve the sparsity of the graph. We highlight fundamental trade-offs that occur between {\em spatial} and {\em spectral} rewiring techniques; while the former often satisfy (i) and (ii) but not (iii), the latter generally satisfy (i) and (iii) at the expense of (ii). We propose a novel rewiring framework that satisfies all of (i)--(iii) through a locality-aware sequence of rewiring operations. We then discuss a specific instance of such rewiring framework and 
validate its effectiveness on several real-world benchmarks, showing that it either matches or significantly outperforms existing rewiring approaches.
\end{abstract}

\section{Introduction}
Graph Neural Networks (GNNs) \citep{sperduti1994encoding, goller1996learning,gori2005new, scarselli2008graph,bruna2013spectral,defferrard2016convolutional} are widely popular types of neural networks operating over graphs. 
The majority of GNN architectures act by locally propagating information across adjacent nodes of the graph and are referred to as  
Message Passing Neural Networks (MPNNs) \citep{gilmer2017neural}. Since MPNNs aggregate messages over the neighbors of each node recursively at each layer, a sufficient number of layers is required for distant nodes to interact through message passing \citep{barcelo2019logical}. In general, this could lead to an explosion of information that needs to be summarized into fixed-size vectors, when the receptive field of a node grows too quickly due to the underlying graph topology. This phenomenon is known as {\bf over-squashing} \citep{alon2020bottleneck}, and it has been proved to be heavily related to topological properties of the input graph such as curvature \citep{topping2021understanding} and effective resistance \citep{black2023understanding,di2023over}.

Since over-squashing is a limitation of the message-passing paradigm that originates in the topology of the input-graph, a solution to these problems is \emph{\bf graph rewiring} \citep{topping2021understanding}, in which one alters the connectivity of the graph to favor the propagation of information among poorly connected nodes. {\em Spatial rewiring} techniques often connect each node to any other node in its $k$-hop 
\citep{bruel2022rewiring,abboud2022shortest}, or in the extreme case operate over a fully-connected graph weighted by attention -- such as for Graph-Transformers \citep{kreuzer2021rethinking, mialon2021graphit, ying2021transformers,rampavsek2022recipe}. {\em Spectral rewiring} techniques instead aim to improve the connectivity of the graph by optimizing for graph-theoretic quantities related to its expansion properties such as the spectral gap, commute time, or effective resistance \citep{arnaiz2022diffwire, karhadkar2022fosr, black2023understanding}. 

While graph rewiring is a promising direction, it also introduces a fundamental trade-off between the preservation of the original topology and the `friendliness' of the graph to message passing. 
Spatial rewiring techniques partly preserve the graph-distance information (i.e. its `{\em locality}') 
by only adding edges within a certain radius or 
by relying on positional information. However, these methods often result in a dense computational graph that increases memory complexity and can cause issues such as over-smoothing \citep{nt2019revisiting,Oono2019,rusch2020coupled, di2022graph}. 
Conversely, spectral rewiring approaches add fewer edges according to some optimization criterion and hence better preserve the sparsity of the input graph. 
 \begin{figure}[!t]

    \centering
    \includegraphics[width=0.9\textwidth]{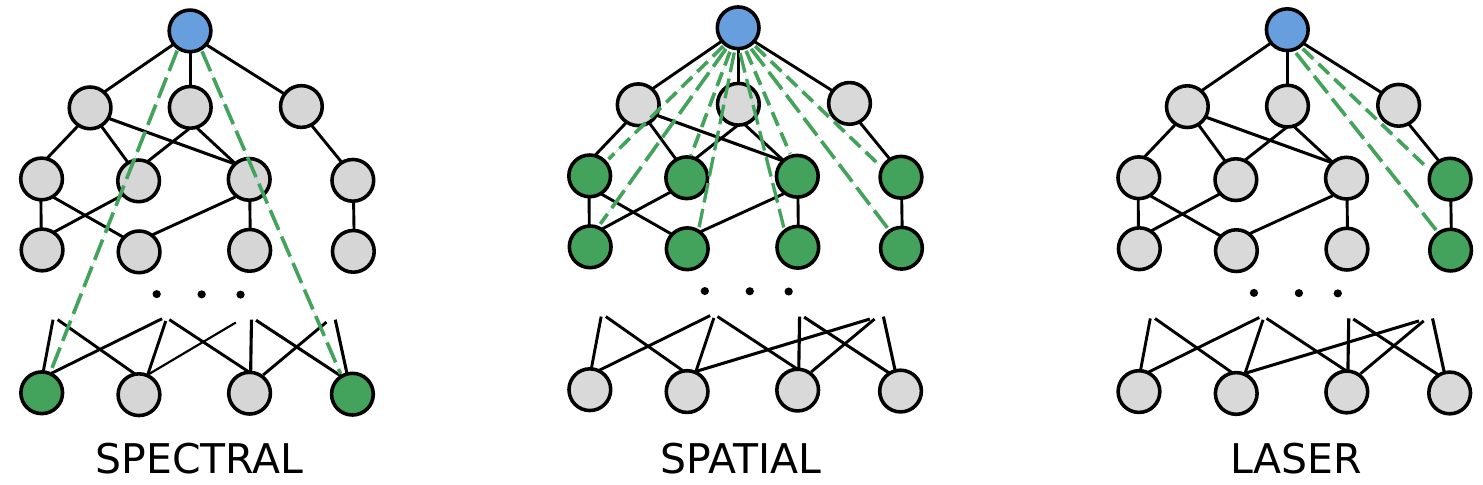}

        \caption{Difference between spectral (left), spatial (middle), and \textbf{LASER} (right) rewirings in green with respect to the blue node of reference. Spectral rewirings are sparse and connect distant nodes. Spatial rewirings are able to retain local inductive biases at the cost of sparsity. \LASER remains both local and sparse by optimizing over the edges to be added.}
        \label{fig:rewirings}
\vspace{-15pt}
\end{figure}
However, these methods `maximally' destroy the locality induced by the graph since they typically insert very `long' edges among distant nodes (see Figure \ref{fig:rewirings}).
The following natural question then arises: {\em Can we design a general graph rewiring framework that leverages the inductive bias of spatial methods but in a more edge-efficient way characteristic of spectral methods?
}



\vspace{-5pt}
\paragraph{Contributions and outline.}
In this work, we address the above question by 
proposing a general framework for graph-rewiring that improves the connectivity, while preserving locality and sparsity: 
\begin{itemize}[leftmargin=*]
\vspace{-5pt}
    \item In Section \ref{sec:ideal_rewiring} we review existing rewiring approaches and classify them as either {\em spatial} or {\em spectral}, highlighting their limitations. We then provide a general list of desiderata for rewiring that amounts to (i) reducing over-squashing, and preserving both (ii) the graph-locality and (iii) its sparsity.
    \item In Section \ref{sec:general_framework} we introduce a {\em paradigm} for rewiring that depends on arbitrary  connectivity and locality measures. We argue that in  order to satisfy (i)--(iii) above, a single rewiring is not enough, and instead propose {\em sequential rewiring}, where multiple graph snapshots are considered. 
    Building on \citet{karhadkar2022fosr}, we also draw an important equivalence between graph-rewiring on one side, and multi-relational GNNs and temporal-GNNs on the other. 
    \item In Section \ref{sec:laser_framework} we present a specific instance of the aforementioned paradigm termed \textbf{L}ocality-\textbf{A}ware \textbf{SE}quential \textbf{R}ewiring (\textbf{LASER}). Our framework leverages the distance similarly to spatial rewiring while also guaranteeing the efficiency of spectral techniques by sampling edges to add according to equivariant, optimal conditions. We show that \LASER reduces over-squashing and better preserves the locality of the graph compared to spectral rewiring techniques.
    \item In Section \ref{sec:exp} we validate \textbf{LASER} on different tasks, attaining performance that is on par or superior to existing rewiring techniques. In particular, we present extensive ablation studies to support our claim that \textbf{LASER} is more efficient than spatial methods while being better at preserving graph-distance information in comparison to spectral approaches. 
\end{itemize}


\section{Background}

\paragraph{Preliminaries on graphs.} Let $\gph = (\V, \E)$ be an undirected graph with $n$ nodes $\V$ and edges $\E$, which are encoded by the non-zero entries of the adjacency matrix $\mathbf{A}\in\R^{n\times n}$. 
Let $\mathbf{D}$ be the diagonal degree matrix such that $\mathbf{D}_{vv} = d_v$. We recall that the normalized graph Laplacian $\mathbf{\Delta} = \mathbf{D}^{-1/2}(\mathbf{D}-\mathbf{A})\mathbf{D}^{-1/2}$ is a symmetric positive semi-definite operator with 
eigenvalues $0 = \lambda_0 \leq \lambda_1 \leq \dots \leq \lambda_{n-1}$. We assume that $\gph$ is connected, so that $\lambda_1 > 0$ and 
refer to it as the 
{\em spectral gap}. 
From the Cheeger inequality, 
it follows that a larger $\lambda_1$ generally means better connectivity of $\gph$. 
We denote by $d_{\gph}(u,v)$ the shortest-path distance between the nodes $u,v$.  
We finally recall that a random walk on $\gph$ is a Markov chain on $\mathsf{V}$ with transition matrix $\mathbf{D}^{-1}\mathbf{A}$ and that 
the {\bf commute time} $\mathsf{CT}$ is defined as the expected number of steps required for a random walk to commute between two nodes. Note that the commute time $\mathsf{CT}(v,u)$ between two nodes $v$ and $u$ is proportional to their {\bf effective resistance} $\mathsf{R}(v,u)$ \citep{chandra1996electrical} as $\mathsf{CT}(v,u) = 2|\mathsf{E}|\mathsf{R}(v,u)$. 


\paragraph{The message-passing paradigm.} We consider the case where each node $v$ has a feature $\mathbf{x}_{v}^{(0)} \in \R^{d}$. It is common to stack the node features into a matrix $\Xb^{(0)} \in \R^{n \times d}$ consistently with the ordering of $\mathbf{A}$. GNNs are functions defined on the featured graph that can output node, edge, or graph-level values. 
The most common family of GNN architectures are Message Passing Neural Networks ($\MPNN$), which compute latent node representations by stacking $T$ layers of the form:
\begin{align}\label{eq:mpnn_update}
    \mathbf{x}_{v}^{(t)} = \mathsf{up}^{(t)}(\mathbf{x}_{v}^{(t-1)},\mathsf{a}^{(t)}(\{\!\!\{\mathbf{x}_{u}^{(t-1)}: (v,u)\in \E\}\!\!\})),
\end{align}
\noindent for $t = 1,\hdots, T$, where  $\mathsf{a}^{(t)}$ is some permutation-invariant {\em aggregation} function, while $\mathsf{up}^{(t)}$ {\em updates} the node's current state with aggregated messages from its neighbors.

\paragraph{Over-squashing and long-range interactions.} While the message-passing paradigm usually constitutes a strong inductive bias, it is problematic for capturing long-range interactions due to a phenomenon known as {\em over-squashing}. 
Given two nodes $u, v$ at distance $d_{\gph}(u, v) = r$, an $\MPNN$ will require $T \geq r$ layers to exchange messages between them. When the receptive fields of the nodes expand too quickly (due to volume growth properties characteristic of many real-world scale free graphs), the MPNN needs to aggregate a large number of messages into fixed-size vectors, leading to some corruption of the information \citep{alon2020bottleneck}. This effect on the propagation of information has been related to the Jacobian of node features decaying exponentially with $r$ 
\citep{topping2021understanding}. More recently, it was shown that the Jacobian is affected by topological properties such as effective resistance \citep{black2023understanding,di2023over}. 

\section{Existing graph-rewiring approaches and their limitations}\label{sec:ideal_rewiring}

The main principle behind graph rewiring in GNNs is to decouple the input graph $\gph$ from the computational one. Namely, {\em rewiring} consists of applying an operation $\mathcal{R}$ to 
$\gph = (\mathsf{V}, \mathsf{E})$, thereby producing a new graph $\mathcal{R}(\gph) = (\mathsf{V},\mathcal{R}(\mathsf{E}))$ on the same vertices but with altered connectivity. We begin by generalizing the  MPNN formalism to account for the rewiring operation $\mathcal{R}$ as follows:
\begin{equation}\label{eq:mpnn_rewiring}
    \mathbf{x}_{v}^{(t)} = \mathsf{up}^{(t)}(\mathbf{x}_{v}^{(t-1)},\mathsf{a}_\gph^{(t)}(\{\!\!\{\mathbf{x}_{u}^{(t-1)}: (v,u)\in \E\}\!\!\}), \mathsf{a}_{\mathcal{R}(\gph)}^{(t)}(\{\!\!\{\mathbf{x}_{u}^{(t-1)}: (v,u)\in \mathcal{R}(\E)\}\!\!\})),
\end{equation}
where a node feature is now updated based on information collected over the input graph $\gph$ and the rewired one $\mathcal{R}(\gph)$, through (potentially) independent aggregation maps. Many rewiring-based GNN models simply exchange messages over $\mathcal{R}(\gph)$, i.e., they take $\mathsf{a}_\gph = 0$. 
The idea of rewiring 
the graph is implicit to many GNNs, from using Cayley graphs \citep{deac2022expander}, to virtual nodes \citep{cai2023connection} and cellular complexes \citep{bodnar2021weisfeilercell}. Other works have studied the implications of {\em directly} changing the connectivity of the graph to de-noise it \citep{klicpera2019diffusion}, or to explore multi-hop aggregations \citep{abu2019mixhop, ma2020path, wang2020multi, nikolentzos2020k}. Ever since over-squashing was identified as an issue in MPNNs \citep{alon2020bottleneck}, several novel rewiring approaches have been proposed to mitigate this phenomenon. 


\paragraph{Related work on spatial rewiring.} 

Most spatial rewiring models attempt to alleviate over-squashing by adding direct connections between a node and every other node within a certain distance \citep{bruel2022rewiring, abboud2022shortest} --- with (dense) Graph Transformers being the extreme case \citep{ying2021transformers, mialon2021graphit, kreuzer2021rethinking, rampavsek2022recipe}.  These frameworks follow \eqref{eq:mpnn_rewiring}, where $\mathsf{a}_\gph$ and $\mathsf{a}_{\mathcal{R}(\gph)}$ are learned independently, or the former is zero while the second implements attention over a dense graph. 
Spatial rewiring reduces over-squashing by creating new paths in the graph, thus decreasing its diameter or pairwise effective resistances between nodes.  
The rewired graph still preserves some information afforded by the original topology in the form of distance-aware aggregations in multi-hop GNNs, or positional encoding in Graph-Transformers. A drawback of this approach, however, is that we end up compromising the sparsity of the graph, thereby impacting efficiency. Thus, a natural question is whether {\em some of these new connections introduced by spatial rewiring methods may be removed without affecting the improved connectivity.} 

We also mention spatial rewiring methods based on improving the curvature of $\gph$ by only adding edges among nodes at distance at most two \citep{topping2021understanding, nguyen2022revisiting}. Accordingly, these models may fail to significantly improve the effective resistance of the graph unless a large number of local edges is added.


\begin{wraptable}{r}{8.3cm}
\vspace{-5mm}
\caption{Properties of different types of rewirings.\vspace{-5mm}}\label{tab:checkmarks}
\begin{tabular}{cccc}\\
Property & Spatial & Spectral & \LASER \\\midrule
Reduce over-squashing & \cmark & \cmark & \cmark \\  \midrule
Preserve locality &\cmark & \xmark & \cmark \\  \midrule
Preserve sparsity & \xmark & \cmark & \cmark\\  \bottomrule
\end{tabular}\vspace{-3mm}
\end{wraptable} 

\paragraph{Related work on spectral rewiring methods.}
A different class of approaches consist of rewiring the graph based on a global \emph{spectral} quantity rather than using spatial distance. 
Two prototypical measures that have been explored in this regard are spectral gap \citep{karhadkar2022fosr} and effective resistance  \citep{  arnaiz2022diffwire, banerjee2022oversquashing, black2023understanding}. It has recently been shown that a node $v$ is mostly insensitive to information contained at nodes that have high effective resistance \citep{black2023understanding,di2023over}; accordingly, spectral rewiring approaches alleviate over-squashing by reducing the effective resistance. Moreover, they achieve that adding only a few edges by optimally increasing the chosen measure of connectivity, hence maintaining the sparsity level of the input graph. However, the edges that are added in the graph typically end up connecting very distant nodes (since the distance between two nodes is at least as large as their effective resistance), hence {\em rapidly} diminishing the role of {\em locality} provided by distance on the original graph. 

    
\paragraph{An ideal rewiring approach.} Given a graph $\gph$, an ideal rewiring map $\mathcal{R}$ should satisfy the following desiderata: (i) {\bf Reduce over-squashing:} $\mathcal{R}$ increases the overall connectivity of $\gph$ --- according to some topological measure --- in order to alleviate over-squashing; (ii) {\bf Preserve locality:} $\mathcal{R}$ preserves some inductive bias afforded by $\gph$, e.g., nodes that are ``distant'' should be kept separate from nodes that are closer in the GNN architecture; (iii) {\bf Preserve sparsity:} $\mathcal{R}$ approximately preserves the sparsity of $\gph$, ideally adding a number of edges linear in the number of nodes. 
While condition (i) represents the main rationale for rewiring the input graph, criteria (ii) and (iii) guarantee that the rewiring is efficient and do not allow the role played by the structural information in the input graph to degrade too much. As discussed above and summarized in Table \ref{tab:checkmarks}, spatial methods typically satisfy only (i) and (ii), but not (iii), while spectral-methods meet (i) and (iii) but fail (ii). 

\textbf{Main idea.} Our main contribution is a novel paradigm for graph rewiring that satisfies criteria (i)--(iii), leveraging a key principle: instead of considering a {\em single} rewired graph $\mathcal{R}(\gph)$, we use a {\em sequence} of rewired graphs $\{\mathcal{R}_\ell(\gph)\}_\ell$ such that for smaller $\ell$, the new edges added in $\mathcal{R}_\ell(\gph)$ are more `local' (with respect to the input graph $\gph$) and sampled based on optimizing a connectivity measure. 

\section{A general paradigm: dynamic rewiring with local constraints}\label{sec:general_framework}

In this Section, we discuss a general graph-rewiring paradigm that can enhance any MPNN and satisfies the criteria (i)--(iii) described above. 
Given a graph $\gph$, consider a trajectory of rewiring operations $\mathcal{R}_\ell$, starting at $\gph_0 = \gph$, of the form: 
\begin{equation}\label{eq:sequence_rewiring}
\gph = \gph_0 \xhookrightarrow[]{\mathcal{R}_1} \gph_1 \xhookrightarrow[]{\mathcal{R}_2} \cdots \xhookrightarrow[]{\mathcal{R}_{L}} \gph_L.
\end{equation}
Since we think of $\gph_\ell$ as the input graph evolved along a dynamical process for $\ell$ iterations, we refer to $\gph_\ell$ as the $\ell$-{\em snapshot}. 
For the sake of simplicity, we assume $\mathcal{R}_\ell = \mathcal{R}$, though it is straightforward to extend the discussion below to the more general case. In order to account for the multiple snapshots, we modify the layer form in \eqref{eq:mpnn_rewiring} as follows:
\begin{equation}\label{eq:sequence_rewiring_mpnn}
    \mathbf{x}_{v}^{(t)} = \mathsf{up}^{(t)}\Big(\mathbf{x}_{v}^{(t-1)},\Big(\mathsf{a}_{\gph_\ell}^{(t)}(\{\!\!\{\mathbf{x}_{u}^{(t-1)}: (v,u)\in \E_\ell\}\!\!\})\Big)_{0\leq\ell\leq L}\Big).
\end{equation}
Below we describe a rewiring paradigm based on an arbitrary \textbf{\em connectivity measure} $\mu: \V \times \V \to \R$ and \textbf{\em locality measure} $\nu: \V \times \V \to \mathbb{R}$. The measure $\mu$ can be any topological quantity that captures how easily different pairs of nodes can communicate in a graph, while the measure $\nu$ is any quantity that penalizes interactions among nodes that are `distant' according to some metric on the input graph. In a nutshell, our choice of $\mathcal{R}$ {\em samples edges to add according to the constraint $\nu$, prioritizing those that maximally benefit the measure $\mu$.} By keeping this generality, we provide a universal approach to do graph-rewiring that can be of interest independently of the specific choices of $\mu$ and $\nu$.
%

%
\paragraph{Improving connectivity while preserving locality.} The first property we demand of the rewiring sequence is that 
for all nodes $v,u$, we have $\mu_{\gph_{\ell+1}}(v,u) \geq \mu_{\gph_{\ell}}(v,u)$ and that for {\em some} nodes, the inequality is {\em strict}. 
If we connect all pairs of nodes with low $\mu$-value, however, we might end up adding non-local edges across distant nodes, hence quickly corrupting the 
locality of $\gph$. To avoid this, 
we {\em constrain} each rewiring by requiring the measure $\nu$ to take values in a certain range $\mathcal{I}_\ell\subset[0,\infty)$: an edge $(v,u)$ appears in the $\ell$-snapshot (for $1\leq \ell \leq L$) according to the following rule:
\begin{equation}\label{eq:local_constraint}
(v,u)\in \mathsf{E}_{\ell} \,\, \text{if} \,\, \Big(\mu_{\gph_0}(v,u) < \epsilon \,\,\, \mathrm{and} \,\,\, \nu_{\gph_0}(v,u)\in\mathcal{I}_{\ell} \Big) \,\,\,  \mathrm{or} \,\,\, (v,u)\in\mathsf{E}_{\ell-1}. 
\end{equation}
To make the rewiring more efficient, the connectivity and locality measures are computed {\em once} over the input graph $\gph_0$. Since the edges to be added connect nodes with low $\mu$-values, the rewiring makes the graphs $\gph_\ell$ friendlier to message-passing as $\ell$ grows. Moreover, by taking increasing ranges of values for the intervals $\mathcal{I}_{\ell}$, we make sure that new edges connect distant nodes, as specified by $\nu$, only at later snapshots. 
Sequential rewiring allows us to interpolate between the given graph and one with better connectivity, creating intermediate snapshots that {\em progressively} add non-local edges. By accounting for all the snapshots $\gph_\ell$ in \eqref{eq:mpnn_rewiring}, the GNN can access both the input graph, and more connected ones, 
at a much {\em finer level} than `instantaneous' rewirings, defined next. 



\paragraph{Instantaneous vs sequential rewiring.} As discussed in Section \ref{sec:ideal_rewiring}, existing rewiring techniques --- particularly those of the spectral type --- often consider the simpler trajectory $\gph_0 \xhookrightarrow{} \mathcal{R}(\gph_0):=\gph_1$ (``instantaneous rewiring'').  
The main drawback of this approach is that in order to improve the connectivity in a {\em single snapshot}, the rewiring map $\mathcal{R}$ is bound to either violate the locality constraint $\nu$, by adding edges between very distant nodes, or compromise the graph-sparsity by adding a large volume of (local) edges. In fact, if that were not the case, we would still be severely affected by over-squashing.  Conversely, sequential rewiring allows a {\em smoother} evolution from the input graph $\gph_0$ to a configuration $\gph_L$ which is more robust to over-squashing, so that we can more easily preserve the inductive bias afforded by the topology via local constraints under \eqref{eq:local_constraint}.

\paragraph{An equivalent perspective: multi-relational GNNs.}
In \citet{karhadkar2022fosr} the notion of relational rewiring was introduced for spectral methods. We expand upon this idea, by noticing that the general, sequential rewiring paradigm described above can be instantiated as a family of multi-relational GNNs
\citep{battaglia2018relational, barcelo2022weisfeiler}. To this aim, consider a slightly more specific instance of \eqref{eq:sequence_rewiring_mpnn}, which extends common MPNN frameworks:
\begin{equation}\label{eq:multi_gnn_1}
\mathbf{x}_{v}^{(t)} = \mathsf{up}^{(t)}\Big(\mathbf{x}_{v}^{(t-1)},\sum_{\ell = 0}^{L}\,\, \sum_{\substack{u:\\(v,u)\in\E_\ell}}\psi_{\ell}^{(t)}(\mathbf{x}_v^{(t-1)},\mathbf{x}_u^{(t-1)})\Big),
\end{equation}
where $\psi_\ell^{(t)}$ are learnable message functions depending on both the layer $t$ and the snapshot $\ell$. It suffices now to note that each edge set $\mathsf{E}_\ell$, originated from the rewiring sequence, can be given its own {\em relation}, so that \eqref{eq:multi_gnn_1} is indeed equivalent to the multi-relation GNN framework of \citet{battaglia2018relational}. In fact, since we consider rewiring operations that only add edges to improve the connectivity, we can rearrange the terms and rename the update and message-function maps, so that we aggregate over existing edges once, and separately over the newly added edges i.e. the set $\E_{\ell}\setminus \E_{\ell-1}$. Namely, we can rewrite \eqref{eq:multi_gnn_1} as
\begin{equation}\label{eq:multi_gnn_2}
   \mathbf{x}_{v}^{(t)} = \mathsf{up}^{(t)}\Big(\mathbf{x}_{v}^{(t-1)},\sum_{u:\,(v,u)\in\E}\psi_{0}^{(t)}(\mathbf{x}_v^{(t-1)},\mathbf{x}_u^{(t-1)}) + \sum_{\ell = 1}^{L}\sum_{\substack{u: \\(v,u)\in\E_{\ell}\setminus\E_{\ell-1}}}\psi_\ell^{(t)}(\mathbf{x}_v^{(t-1)},\mathbf{x}_u^{(t-1)})\Big). 
\end{equation}
Accordingly, we see how our choice of sequential rewiring can be interpreted as an {\em extension} of relational rewiring in \citet{karhadkar2022fosr}, where $L=1$. Differently from \citet{karhadkar2022fosr}, the multiple relations $\ell\geq 1$ allow us to add connections over the graph among increasingly less local nodes, meaning that the edge-type $\ell$ is now associated to a notion of locality specified by the choice of the constraint $\nu(v,u)\in\mathcal{I}_\ell$. 
We finally observe that the connection between graph-rewiring and relational GNNs is not surprising once we think of the sequence of rewiring in \eqref{eq:sequence_rewiring} as snapshots of a {\em temporal dynamics} over the graph connectivity. Differently from the setting of {\em temporal GNNs} \citep{rossi2020temporal} though, here the evolution of the connectivity over time is guided by our rewiring procedure rather than by an intrinsic law on the data. In fact, \citet{gao2022equivalence} studied the equivalence between temporal GNNs and static multi-relational GNNs, which further motivate the analogy discussed above.

\section{Locality-Aware SEquential Rewiring: the LASER framework}\label{sec:laser_framework}

We consider an instance of the outlined sequential rewiring paradigm, giving rise to the \textbf{LASER} framework used in our experiments. We show that \textbf{LASER} (i) mitigates over-squashing, (ii) preserves the inductive bias provided by the shortest-walk distance on $\gph$ better than spectral approaches, while (iii) being sparser than spatial-rewiring methods. 
 
\paragraph{The choice of locality.} We choose $\nu$ to be 
the {\em shortest-walk distance} $d_{\gph}$. In particular, if in \eqref{eq:local_constraint} we choose intervals $\mathcal{I}_\ell = \delta_{\ell+1}$, then at the $\ell$-snapshot $\gph_\ell$ we only add edges among nodes at distance {\em exactly} $\ell+1$. Our constraints prevent distant nodes from interacting at earlier snapshots and allows the GNN to learn message functions $\psi_\ell$ in \eqref{eq:multi_gnn_2} for each hop level $\ell$. 
If we choose $\E_{\ell}\setminus\E_{\ell-1}$ to be the set of {\em all} edges connecting nodes whose distance is {\em exactly} $\ell + 1$, then \eqref{eq:multi_gnn_2} is equivalent to the $L$-hop MPNN class studied in \citet{feng2022powerful}. This way though, we 
generally lose the sparsity of $\gph$ and increase the risk of over-smoothing. Accordingly, we propose to only add edges that satisfy the locality constraint and have connectivity measure `small' so that their addition is optimal for reducing over-squashing. 



\paragraph{The choice of the connectivity measure $\mu$.} Although edge curvature or effective resistance $\mathsf{R}$ are related to over-squashing  \citep{topping2021understanding,black2023understanding,di2023over}, 
 computing these metrics incur high complexity -- $O(|\mathsf{E}|d_{\mathrm{max}}^2)$  for the curvature and $O(n^3)$ for $\mathsf{R}$. Because of that, we propose a more efficient connectivity measure: 
\begin{equation}\label{eq:mu_k}
\mu_k(v, u) := (\tilde{\mathbf{A}}^k)_{vu}, \quad \tilde{\mathbf{A}} := \mathbf{A} + \mathbf{I}.
\end{equation}
Because of the self-loops, the entry $(\tilde{\mathbf{A}}^k)_{vu}$ equals the number of walks from $v$ to $u$ of length {\em at most} $k$. 
Once we fix a value $k$, if $\mu_k(v,u)$ is large, then the two nodes $v,u$ have multiple alternative routes to exchange information (up to scale $k$) and would usually have small effective resistance. In particular, according to \citet[Theorem 4.1]{di2023over}, we know that the number of walks among two nodes  is a {\em proxy} for how sensitive a pair of nodes is to over-squashing. 

\paragraph{LASER focus.} We can now describe our framework. Given a node $v$ and a snapshot $\gph_\ell$, we consider the set of nodes at distance exactly $\ell+1$ from $v$, which we denote by $\mathcal{N}_{\ell+1}(v)$. We introduce a global parameter $\rho\in (0,1]$ and add edges (with relation type $\ell$ as per \eqref{eq:multi_gnn_2}) among $v$ and the fraction $\rho$ of nodes in $\mathcal{N}_{\ell+1}(v)$ with the  lowest connectivity score -- if this fraction is smaller than one, then we round it to one. This way, we end up adding only a percentage $\rho$ of the edges that a normal multi-hop GNNs would have, but we do so by {\em prioritizing those edges that improve the connectivity measure the most}. To simplify the notations, we let $
\mathcal{N}^\rho_{\ell+1}(v)\subset \mathcal{N}_{\ell+1}(v)$, be the $\rho$-fraction of nodes at distance $\ell+1$ from $v$, where $\mu_k$ in \eqref{eq:mu_k} takes on the lowest values. 
We express the layer-update of \textbf{LASER} as
\begin{equation}
\mathbf{x}_{v}^{(t)} = \mathsf{up}^{(t)}\Big(\mathbf{x}_{v}^{(t-1)},\sum_{u:\,(v,u)\in\E}\psi_{0}^{(t)}(\mathbf{x}_v^{(t-1)},\mathbf{x}_u^{(t-1)}) + \sum_{\ell = 1}^{L}\,\, \sum_{u\in\mathcal{N}^\rho_{\ell+1}(v)}\psi_\ell^{(t)}(\mathbf{x}_v^{(t-1)},\mathbf{x}_u^{(t-1)})\Big).
     \label{eq:laser-update}
\end{equation}
We note that when $\rho=0$, equation~(\ref{eq:laser-update}) reduces to a standard MPNN on the input graph, while for $\rho=1$ we recover multi-relational $L$-hop MPNNs  \citep{feng2022powerful}. Although the framework encompasses different choices of the message-functions $\psi_\ell$, in the following we focus on the \textbf{LASER}-GCN variant, whose update equation is reported in Appendix (Section \ref{app:sec:theory}). 


We now show that the \textbf{LASER} framework satisfies the criteria (i)--(iii)  introduced in Section \ref{sec:ideal_rewiring}. 
Let $
\mathbf{J}^{(r)}(v,u) := \partial \mathbf{x}_v^{(r)}/\partial \mathbf{x}_u^{(0)}$
be the Jacobian of features after $r$ layers of GCN on $\gph$, and similarly we let $\tilde{\mathbf{J}}^{(r)}(v,u)$ be the Jacobian of features after $r$ layers of \textbf{LASER}-GCN in \eqref{eq:laser-GCN}. In the following, we take the expectation with respect to the Bernoulli variable $\mathsf{ReLU}'$ which is assumed to have probability of success $\rho$ for all paths in the computational graph as in \citet{xu2018representation, di2023over}. We recall that given $i\in\mathsf{V}$ and $1\leq \ell\leq L$, $d_{i,\ell}$ enters \eqref{eq:laser-GCN}. 

\begin{proposition}
Let $v,u\in\mathsf{V}$ with $d_\gph(v,u) = r$, and assume that there exists a single path of length $r$ connecting $v$ and $u$.
Assume that \textbf{LASER} adds an edge between $v$ and some node $j$ belonging to the path of length $r$ connecting $v$ to $u$, with $d_\gph(v,j) = \ell < r$. Then for all $m\leq r$, 
we have
\[
\|\mathbb{E}[\tilde{\mathbf{J}}^{(r-\ell+1)}(v,u)]\|\geq \frac{(d_{\mathrm{min}})^{\ell}}{\sqrt{d_{v,\ell-1}d_{j,\ell-1}}} \|\mathbb{E}[\mathbf{J}^{(m)}(v,u)]\|.
\]
\end{proposition}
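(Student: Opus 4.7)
The plan is to compare the two Jacobians by isolating a single path in each computational graph: the unique length-$r$ walk $v = i_0, i_1, \dots, i_r = u$ in $\gph$, and the ``shortcut'' walk $v \xrightarrow{(v,j)} j = i_\ell \to i_{\ell+1} \to \cdots \to i_r = u$ of length $r - \ell + 1$ in the LASER-rewired graph. First, I would dispose of the case $m < r$: because $\gph$ contains a \emph{unique} length-$r$ walk from $v$ to $u$ (and no shorter one), $\mathbf{J}^{(m)}(v,u) = 0$ for $m < r$ and the bound is vacuous. Thus it suffices to treat $m = r$.

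Second, I would unfold the standard GCN Jacobian along the unique path by the chain rule, so that each of its $r$ factors picks up a degree-normalization $1/\sqrt{\tilde d_{i_k}\tilde d_{i_{k+1}}}$, a weight $\mathbf{W}^{(t)}$, and a $\mathsf{ReLU}'$ term. Taking expectation against the Bernoulli model of \citet{xu2018representation, di2023over} where $\mathsf{ReLU}'$ is iid Bernoulli$(\rho)$, uniqueness of the walk prevents any cross-path cancellation and yields
$$\mathbb{E}[\mathbf{J}^{(r)}(v,u)] \;=\; \rho^{\,r}\Big(\prod_{k=0}^{r-1}\tfrac{1}{\sqrt{\tilde d_{i_k}\tilde d_{i_{k+1}}}}\Big)\,\mathbf{W}^{(r)}\cdots \mathbf{W}^{(1)}.$$

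Third, I would repeat the unfolding for $\tilde{\mathbf{J}}^{(r-\ell+1)}$ on the rewired graph. The edge $(v,j)$ lies in the LASER snapshot at relation $\ell - 1$ (since $d_\gph(v,j) = \ell$) and, per the LASER-GCN update in \eqref{eq:laser-update} (with the normalization spelled out in \ref{app:sec:theory}), contributes a factor $(d_{v,\ell-1} d_{j,\ell-1})^{-1/2}$; the remaining $r - \ell$ original edges along $i_\ell \to \cdots \to i_r$ contribute their usual GCN normalizations. Lower-bounding $\|\mathbb{E}[\tilde{\mathbf{J}}^{(r-\ell+1)}(v,u)]\|$ by this single path contribution gives
$$\|\mathbb{E}[\tilde{\mathbf{J}}^{(r-\ell+1)}(v,u)]\| \;\geq\; \rho^{\,r-\ell+1}\cdot \tfrac{1}{\sqrt{d_{v,\ell-1}d_{j,\ell-1}}}\cdot\prod_{k=\ell}^{r-1}\tfrac{1}{\sqrt{\tilde d_{i_k}\tilde d_{i_{k+1}}}}\cdot\|\mathbf{W}^{(r-\ell+1)}\cdots\mathbf{W}^{(1)}\|.$$
Forming the ratio with $\|\mathbb{E}[\mathbf{J}^{(r)}(v,u)]\|$, the shared tail $\prod_{k=\ell}^{r-1}(\tilde d_{i_k}\tilde d_{i_{k+1}})^{-1/2}$ and (up to absorbing a uniform bound on weight norms into the constant) the $\mathbf{W}^{(t)}$ product cancel, leaving the first $\ell$ standard-GCN normalizations in the denominator. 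Each such factor is at most $1/d_{\mathrm{min}}$, so their product is at most $d_{\mathrm{min}}^{-\ell}$, which rearranges exactly to the claimed inequality $\|\mathbb{E}[\tilde{\mathbf{J}}^{(r-\ell+1)}(v,u)]\| \geq \tfrac{d_{\mathrm{min}}^{\ell}}{\sqrt{d_{v,\ell-1}d_{j,\ell-1}}}\|\mathbb{E}[\mathbf{J}^{(m)}(v,u)]\|$.

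The main obstacle is the ``norm of an expectation'' step. Lower-bounding $\|\mathbb{E}[\tilde{\mathbf{J}}]\|$ by the contribution of one path requires that other walks in the rewired graph do not cancel it destructively. The uniqueness of the length-$r$ path in $\gph$ is what makes this work: in the rewired graph the shortcut $(v,j)$ creates exactly one new length-$(r-\ell+1)$ route to $u$, and all strictly shorter routes are ruled out by the same uniqueness assumption, so the shared $\mathbf{W}^{(r-\ell+1)}\cdots\mathbf{W}^{(1)}$ factor can be pulled out common to both sides. A secondary technicality is the $\rho^{\,r}$ versus $\rho^{\,r-\ell+1}$ mismatch between the two expectations, which must be absorbed into the inequality (using $\rho \leq 1$ turns the excess $\rho^{\ell-1}$ in the numerator into a harmless factor); confirming the precise normalization coefficient $(d_{v,\ell-1}d_{j,\ell-1})^{-1/2}$ from \eqref{eq:laser-GCN} and ensuring the weight matrices are indeed shared across relations (or uniformly bounded) completes the argument.
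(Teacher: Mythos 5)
Your proposal is correct and follows essentially the same route as the paper's proof: dismiss $m<r$ since the Jacobian vanishes below the distance, unfold both expected Jacobians along the unique path (with the LASER shortcut $(v,j)$ contributing the $(d_{v,\ell-1}d_{j,\ell-1})^{-1/2}$ factor), take the ratio so the shared tail normalizations from $j$ to $u$ cancel, and bound the remaining $\ell$ factors by $d_{\mathrm{min}}^{-\ell}$. The only deviation is your per-layer Bernoulli model giving $\rho^{r}$ versus $\rho^{r-\ell+1}$ (the paper's per-path convention yields a single $\rho$ on each side, cancelling exactly), and your observation that the surplus $\rho^{-(\ell-1)}\geq 1$ only strengthens the inequality handles this correctly.
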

The result is not surprising and shows that in general, the \textbf{LASER}-rewiring can improve the Jacobian sensitivity significantly and hence alleviates over-squashing, satisfying desideratum (i). 
Next, we validate that the effects of the local constraints when compared to unconstrained, global spectral methods. 
Below, we let $\mathcal{D}_\gph$ be the {\em matrix of pairwise distances} associated with the graph $\gph$, i.e. $(\mathcal{D}_\gph)_{vu} = d_\gph(v,u)$. We propose to investigate $\|\mathcal{D}_\gph - \mathcal{D}_{\mathcal{R}(\gph)}\|_F$, where $\|\cdot\|_F$ is the Frobenius norm and $\mathcal{R}(\gph)$ is either a baseline spectral rewiring, or our \textbf{LASER}-framework. We treat this quantity as a proxy for how well a rewiring framework is able to preserve the inductive bias given by the input graph. In fact, for many graphs (including molecular-type with small average degree), spectral rewirings incur a larger Frobenius deviation even if they add fewer edges, since these edges typically connect very distant nodes in the graph. To this aim, we show a setting where \textbf{LASER} preserves more of the locality inductive bias than spectral-based methods provided we choose the factor $\rho$ small enough. Below, we focus on a case that, according to \cite{di2023over,black2023understanding}, we know to be a worst-case scenario for over-squashing considering that the commute time scales cubically in the number of nodes. Put differently, the graph below represents a prototypical case of `bottleneck' encountered when information has to travel from the end of the chain to the clique. 

\begin{proposition}\label{prop:frob} Let $\mathsf{G}$ be a `lollipop' graph composed of a chain of length $L$ attached to a clique of size $n$ sufficiently large. Consider a spectral rewiring $\mathcal{R}$ which adds an edge between nodes with the highest effective resistance. We can choose the factor $\rho \in (0,1)$ as a function of $L$ so that \LASER with a single snapshot, on average, adds a number of edges that guarantees:
\[
\|\mathcal{D}_\gph - \mathcal{D}_{\mathcal{R}(\gph)} \|_{F} \geq \|\mathcal{D}_\gph - \mathcal{D}_{\LASER} \|_{F}.
\]
\end{proposition}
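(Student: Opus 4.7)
The plan is to compare the two Frobenius norms by exploiting the structural asymmetry of the lollipop: the spectral rewiring inserts one long-range edge whose effect on distances is dramatic, whereas \LASER is restricted to length-$2$ shortcuts whose cumulative effect can be controlled by $\rho$. First I would fix notation: chain nodes $v_1,\dots,v_L$ attached via $v_L$ to a distinguished vertex $c_1$ of the $n$-clique $\{c_1,\dots,c_n\}$. The distance matrix $\mathcal{D}_\gph$ is explicit: $d(v_i,v_j)=|i-j|$, $d(c_i,c_j)=1$ for $i\neq j$, and $d(v_i,c_j)=L-i$ if $j=1$ and $L-i+1$ otherwise. Since $n$ is sufficiently large, clique-internal resistances are negligible (each $O(1/n)$), and a standard series/parallel argument on lollipops shows the maximum effective resistance is attained by any pair of the form $(v_1,c_j)$; by clique symmetry we may assume $\mathcal{R}$ adds a single edge $(v_1,c)$.

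Then I would compute $\|\mathcal{D}_\gph-\mathcal{D}_{\mathcal{R}(\gph)}\|_F^2$ by focusing on chain-to-clique pairs, which change the most. After inserting $(v_1,c)$, $d_{\mathcal{R}(\gph)}(v_i,c_j)=\min(L-i+1,\, i+1)$ up to an $O(1)$ correction when $c_j=c$, giving a reduction of $L-2i+O(1)$ for $i\leq L/2$ per clique vertex. Summing yields
\[
\|\mathcal{D}_\gph - \mathcal{D}_{\mathcal{R}(\gph)}\|_F^2 \;\geq\; 2 n \sum_{i=1}^{\lfloor L/2\rfloor}(L-2i)^2 \;\gtrsim\; \frac{nL^3}{6}.
\]

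For \LASER the only candidate edges at distance exactly $2$ are chain shortcuts $(v_i,v_{i+2})$ together with $O(n)$ edges from $v_L$ to clique vertices $c_j$ with $j\neq 1$. Interpreting ``on average'' probabilistically (each candidate edge realised independently with probability $\rho$, consistent with random tie-breaking among equal $\mu_k$ scores along the chain), the expected number of chain shortcuts is $\Theta(\rho L)$. A linearity-of-expectation argument, viewing the active shortcuts in the range $[j,L-2]$ as a Bernoulli$(\rho)$ sequence and extracting a greedy non-overlapping sub-matching, shows that the expected reduction of $d(v_j,v_L)$ is of order $\rho(L-j)$, and therefore
\[
\mathbb{E}\bigl[\|\mathcal{D}_\gph-\mathcal{D}_{\LASER}\|_F^2\bigr] \;\lesssim\; 2 n \rho^2 \sum_{j=1}^{L}(L-j)^2 \;\lesssim\; \frac{n\rho^2 L^3}{3}.
\]
Choosing $\rho$ as a function of $L$ small enough that $\rho^2\leq 1/2$ (a small absolute constant already suffices; taking $\rho=1/L$ would give a strict separation) yields $n\rho^2L^3/3\leq nL^3/6$, and taking square roots gives the claimed inequality.

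The main obstacle will be the probabilistic estimate of the distance reduction under random length-$2$ shortcuts: one must show that, for the chosen $\rho$, the expected \emph{squared} reduction scales like $\rho^2(L-j)^2$ and not like $\rho(L-j)$. This requires distinguishing the regime $\rho L\ll 1$ (variance-dominated, reduction concentrated on few pairs) from $\rho L\gg 1$ (mean-dominated), and controlling the compounding of overlapping shortcuts via a coupling with maximum matching on the candidate line. A secondary technicality is the ``round-to-$1$'' rule in the \LASER description, which, taken literally, would force one edge per chain node and potentially halve the chain; under the probabilistic interpretation adopted above this rule is essentially inactive in the regime $\rho L\gg 1$ and only perturbs boundary nodes of the chain, so it does not alter the leading-order estimate.
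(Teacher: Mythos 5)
Your proposal is correct in substance and follows the same overall skeleton as the paper's proof --- lower-bound the spectral rewiring's perturbation of the distance matrix, upper-bound LASER's, and then tune $\rho$ --- but both key estimates are carried out differently, and yours yields a quantitatively stronger conclusion. The paper's spectral lower bound only tracks distances from the chain endpoint $v$: the added edge changes $d(v,\cdot)$ to the clique interior by at least $L-1$, giving $\|\mathcal{D}_\gph - \mathcal{D}_{\mathcal{R}(\gph)}\|_F^2 \geq (n-1)(L-1)^2+(L-2)^2 \sim nL^2$; you instead account for \emph{all} chain-to-clique pairs, whose distances drop by $L-2i+O(1)$, obtaining the sharper $\sim nL^3$ scaling. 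On the LASER side the paper is deterministic: choosing $\rho = k/(2L)$ so that on average $k$ edges fall on the chain, it bounds every chain--chain change by $k$ and every chain--clique change by $k+1$, so $\|\mathcal{D}_\gph - \mathcal{D}_{\LASER}\|_F^2 \leq L^2k^2 + nL(k+1)^2$; comparing against its weaker $\sim nL^2$ spectral bound then forces $k \lesssim \sqrt{L}$, i.e.\ $\rho \leq 1/(2\sqrt{L})$ with $L\geq 8$ and an explicit threshold on $n$. Your Bernoulli model with second moments gives $\lesssim n\rho^2L^3$, which against your sharper $nL^3$ bound needs only a constant $\rho$ (or $\rho=1/L$ for strict separation) --- a regime the paper's proof cannot reach precisely because its spectral bound is a factor of $L$ weaker. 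Two remarks. First, your ``main obstacle'' is easier than you fear: the reduction of $d(v_j,v_L)$ is at most the \emph{total} number of sampled shortcuts in $[j,L]$, a $\mathrm{Bin}(L-j,\rho)$ variable, so its second moment is at most $\rho(L-j)+\rho^2(L-j)^2$; the extra $n\rho L^2$ term produced by the linear part is swallowed by $nL^3$ for any $\rho$, so no matching or coupling argument is needed for an upper bound. Second, you omit the chain--chain pairs from the displayed LASER estimate (they contribute $\lesssim \rho L^3+\rho^2L^4$, which lacks the factor $n$ and is absorbed since $n$ is taken sufficiently large), and your probabilistic reading of ``on average'' --- which is what neutralises the round-to-one rule --- is exactly the reading the paper adopts implicitly, since its deterministic ``changed by at most $k$'' step is otherwise unjustified when $k$ is only the expected number of added edges; flagging this explicitly, as you do, is a genuine improvement in rigor over the paper's treatment.
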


We refer to the Appendix (Section \ref{app:sec:theory}) for an explicit characterization on how large $n$ needs to be depending on $L$ and the proofs of the statements stated above. Finally, as desired in (iii), we observe that compared to dense multi-hop GNNs, LASER is more efficient since it only adds a fraction $\rho$ of edges for each node $v$ and each orbit-level $\mathcal{N}_{\ell+1}(v)$. In fact, for many sparse graphs (such as molecular ones) the model ends up adding a number of edges proportional to the number of nodes (see Section \ref{app:subsec:density} in the Appendix for a discussion and ablations).

\section{Experiments}\label{sec:exp}
In this section, we validate our claims on a range of tasks and benchmarks. Beyond comparing the performance of \LASER to existing baselines, we run ablations to address the following {\em important questions}: (1) Does \LASER improve the graph's connectivity? (2) Does \LASER preserve locality information better than spectral rewiring approaches? (3) What is the impact of the fraction $\rho$ of edges sampled? (4) What if we sample edges to be added from $\mathcal{N}_{\ell+1}(v)$ randomly, rather than optimally according to $\mu$ in \eqref{eq:mu_k}? (5) Is \LASER scalable to large graphs? In the Appendix (Section \ref{app:sec:additional-results}), we provide a density comparison between \LASER and Multi-Hop GNNs, discuss our tie-breaking procedure that guarantees equivariance in expectation and further improves performance, provide an ablation using different underlying
MPNNs, and discuss additional motivation for the need for locality. We also provide, in Section \ref{app:scalability}, a more thorough scalability analysis.

\begin{wraptable}{r}{8.75cm}
\vspace{-4mm}
\caption{Results for the \pepfunc, \pepstruct, and \texttt{PCQM-Contact} datasets. Performances are Average Precision (AP) (higher is better), Mean Absolute Error (MAE) (lower is better), and Mean Reciprocal Rank (MRR) (higher is better), respectively.}\label{tab:peptides-exps}
\scalebox{0.75}{
    \begin{tabular}{l c c c}\toprule
    \multirow{2}{*}{\textbf{Rewiring}} &\multicolumn{1}{c}{\pepfunc} &\multicolumn{1}{c}{\pepstruct} & \multicolumn{1}{c}{\texttt{PCQM-Contact}}  \\ \cmidrule(lr){2-2} \cmidrule(lr){3-3} \cmidrule(lr){4-4}
    &\textbf{Test AP $\uparrow$} &\textbf{Test MAE $\downarrow$} & \textbf{Test MRR $\uparrow$} \\\midrule
    None &0.5930$\pm$0.0023 & 0.3496$\pm$0.0013 &0.3234$\pm$0.0006\\
    \midrule
    SDRF &0.5947$\pm$0.0035& 0.3404$\pm$0.0015& 0.3249$\pm$0.0006\\
    GTR &0.5075$\pm$0.0029& 0.3618$\pm$0.0010&0.3007$\pm$0.0022\\
    FOSR &0.5947$\pm$0.0027& 0.3078$\pm$0.0026 & 0.2783$\pm$0.0008 \\
    BORF &0.6012$\pm$0.0031& 0.3374$\pm$0.0011& TIMEOUT \\
    \midrule 
    \LASER &\textbf{0.6440}$\pm$0.0010& \textbf{0.3043}$\pm$0.0019&\textbf{0.3275}$\pm$0.0011\\
    \bottomrule
    \end{tabular}
    }
\end{wraptable} 

\paragraph{Benchmarks.}
We evaluate on the Long Range Graph Benchmark (LRGB) \citep{dwivedi2022long} and TUDatasets \citep{morris2020tudataset}. In the experiments, we fix the underlying model to GCN, but provide ablations with different popular MPNNs in the Appendix (Section \ref{app:different-mpnns}). For spatial curvature-based rewirings, we compare against SDRF \citep{topping2021understanding} and BORF \citep{nguyen2023revisiting}. For spectral techniques, we compare against FOSR \citep{karhadkar2022fosr}, a spectral gap rewiring technique, and GTR \citep{black2023understanding}, an effective resistance rewiring technique. We also compare to DiffWire \citep{arnaiz2022diffwire}, a differentiable rewiring technique.

Based on \citet{karhadkar2022fosr} and the parallelism we draw between rewiring and multi-relational GNNs, for all techniques, we report results tuned over both a `standard' and relational \citep{schlichtkrull2018modeling} model for the baselines, where we assign original and rewired edges distinct relational types. In particular, R-GCN in these cases is then a special instance of \eqref{eq:mpnn_rewiring}. For additional details on the tasks and hyper-parameters, we refer to the Appendix (Section \ref{appendix:exps}).


\paragraph{LRGB.} We consider the \texttt{Peptides} ($15\,535$ graphs) and \texttt{PCQM-Contact} ($529\,434$ graphs) datatsets, from the Long Range Graph Benchmark (LRGB). There are two tasks associated with \texttt{Peptides}, a peptide function classification task \pepfunc and a peptide structure regression task \pepstruct. \texttt{PCQM-Contact} is a link-prediction task, in which the goal is to predict pairs of distant nodes that will be adjacent in 3D space. We replicate the experimental settings from \citet{dwivedi2022long}, with a $5$-layer MPNN for each of the rewirings as the underlying model. We choose the hidden dimension in order to respect the $500$k parameter budget. In Table \ref{tab:peptides-exps}, we report the performance on the three tasks. \LASER convincingly outperforms all baselines on the three tasks, while the other rewiring baselines frequently perform worse than the standard GCN model. On \texttt{PCQM-Contact}, the rewiring time for BORF surpasses the $60$ hour limit enforced by \cite{dwivedi2020benchmarking} on our hardware, so we assign it a TIMEOUT score.

\paragraph{TUDatasets.} We evaluate \LASER on the \texttt{REDDIT-BINARY}, \texttt{IMDB-BINARY}, \texttt{MUTAG}, \texttt{ENZYMES}, \texttt{PROTEINS}, and \texttt{COLLAB} tasks from TUDatasets, which were chosen by \citet{karhadkar2022fosr} under the claim that they require long-range interactions. We evaluate on $25$ random splits, fixing the hidden dimension for all models to $64$ and the number of layers to $4$, as in \citet{karhadkar2022fosr}. We avoid the use of dropout and use Batch Norm \citep{ioffe2015batch}. We refer to the Appendix (Section \ref{app:tudatasets}) for further details on the hyper-parameters and a discussion on some drawbacks of these tasks. Table \ref{tab:tudatasets} shows the results on the aforementioned benchmarks. 
\LASER most consistently achieves the best classification accuracy, attaining the highest mean rank. 
 \vspace{-2.5mm}
\begin{table}[!h]
	\caption{Accuracy $\pm$ std over $25$ random splits for the datasets and rewirings. Colors highlight \textcolor{red}{First}, \textcolor{blue}{Second}, and \textcolor{orange}{Third}; we  report the mean rank achieved on the valid runs. OOM is Out of Memory.}
	\label{tab:tudatasets}
\resizebox{\textwidth}{!}{
			\centering
			\scriptsize %
			\bgroup
			\def\arraystretch{1.2}
			\begin{tabular}{lccccccc} 
        \toprule 
        Rewiring & \texttt{REDDIT-BINARY} & \texttt{IMDB-BINARY} & \texttt{MUTAG} & \texttt{ENZYMES} & \texttt{PROTEINS} & \texttt{COLLAB} & Mean Rank \\ \hline 
None & 81.000$\pm$2.717 & \second{64.280}$\pm$1.990 & 74.737$\pm$5.955 & 28.733$\pm$5.297 & 64.286$\pm$2.004 & 
68.960$\pm$2.284 & 4.83 \\
\hline
DiffWire & OOM & 59.000$\pm$3.847 & \third{80.421}$\pm$9.707 & 28.533$\pm$4.475 & \second{72.714}$\pm$2.946 & 65.440$\pm$2.177 & 4.83 \\
GTR & \second{85.700}$\pm$2.786 & 52.560$\pm$4.104 & 78.632$\pm$6.201 & 26.333$\pm$5.821 & \third{72.303}$\pm$4.658 & 68.024$\pm$2.299 & 4.67 \\
SDRF & 84.420$\pm$2.785 & 58.290$\pm$3.201 & 74.526$\pm$5.355 & \second{30.567}$\pm$6.188 & 68.714$\pm$4.233 & \second{70.222}$\pm$2.571 & 4.50 \\
FOSR & \first{85.930}$\pm$2.793 & 60.400$\pm$5.855 & 75.895$\pm$7.211 & 28.600$\pm$5.253 & 71.643$\pm$3.428 & \third{69.848}$\pm$3.485 & 3.67 \\
BORF & 84.920$\pm$2.534 & \third{60.820}$\pm$3.877 & \second{81.684}$\pm$7.964 & \third{30.500}$\pm$6.593 & 68.411$\pm$4.122 & OOM & 3.60 \\
\hline
\LASER & \third{85.458}$\pm$2.827 & \first{64.333}$\pm$3.298 & \first{82.204}$\pm$6.728 & \first{34.333}$\pm$6.936 & \first{74.381}$\pm$3.443 & \first{70.923}$\pm$2.538  & 1.37\\
        \bottomrule\\[-1em]

				\end{tabular}
			\egroup

	\hspace{0.01pt}
	\linebreak[4]
}	
 %
\end{table}
 \vspace{-5mm}

\paragraph{Ablation studies.} In the following, we choose FOSR as a typical spectral rewiring approach, while taking \LASER with $\rho = 1$ as an instance of a dense, multi-hop GNN (i.e. classical spatial rewiring). For the purpose of these ablations, we conduct experiments on the \texttt{Peptides} dataset. We start by investigating questions (1) and (2), namely, how well \LASER improves connectivity while respecting locality. To this end, we increment the number of snapshots from $2$ to $5$ given densities $\rho=0.1$ and $\rho=1$ for \LASER and instead vary the number of edge additions of FOSR spanning the values $10$, $20$, $50$, and $100$. To assess the connectivity, we report the mean total effective resistance --- which is a good proxy for over-squashing \citep{black2023understanding,di2023over} --- while for the locality, we evaluate the norm of the difference between the original graph distance matrix and that of the rewired graph $\|\mathcal{D}_\gph - \mathcal{D}_{\mathcal{R}(\gph)}\|_F$ as per Proposition \ref{prop:frob}. Figure \ref{fig:peptides-ablation} shows the results of this ablation. We validate that the sparse \LASER framework decreases the mean total effective resistance consistently over increasing snapshots as well as other rewiring techniques. Moreover, we find that \LASER with $\rho = 0.1$ is {\em better} than dense spatial methods and especially surpasses spectral approaches {\em at preserving information contained in the distance matrix}.  




\begin{figure}
\centering
\begin{minipage}{.48\textwidth}
  \centering
  \includegraphics[width=1\linewidth]{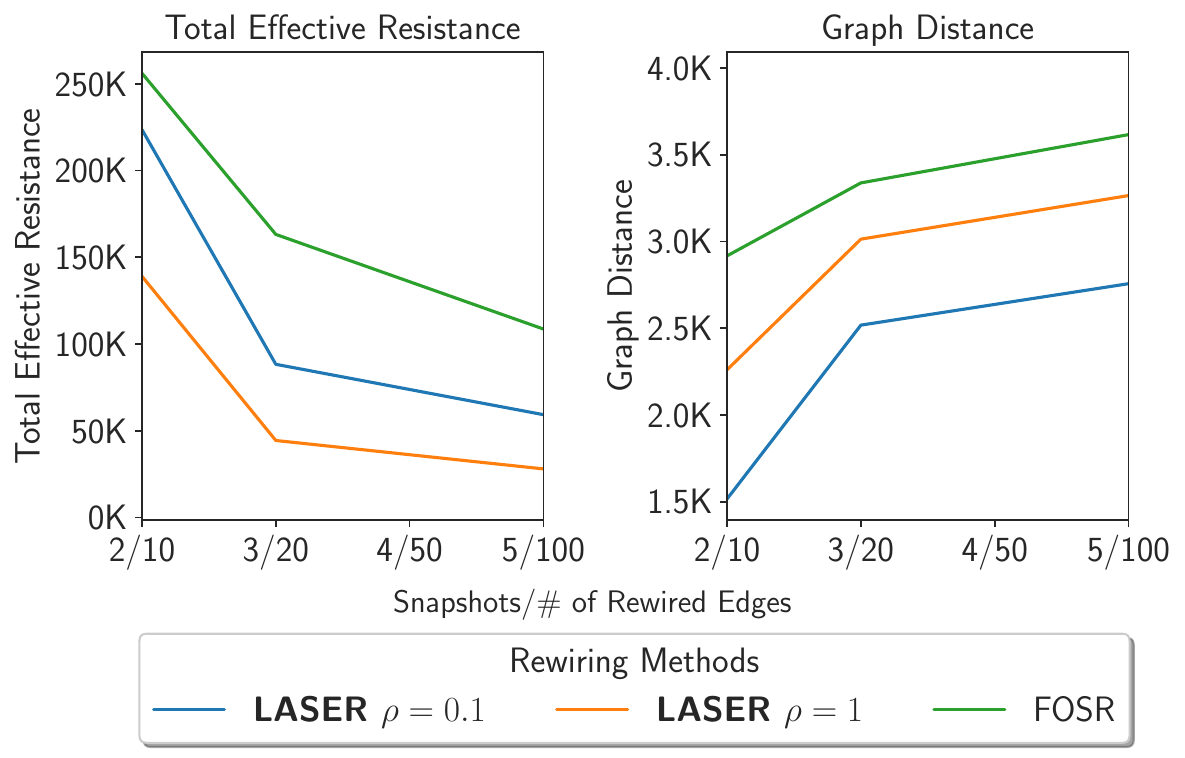}
  \captionof{figure}{Total effective resistance and graph distance, as measured by the Frobenius norm,  when varying the number of snapshots from $2$ to $5$ and $\rho$ from $0.1$ to $1$. }
  \label{fig:peptides-ablation}
\end{minipage}%
\hspace{0.2cm}
\begin{minipage}{.48\textwidth}
  \centering
  \includegraphics[width=1\linewidth]{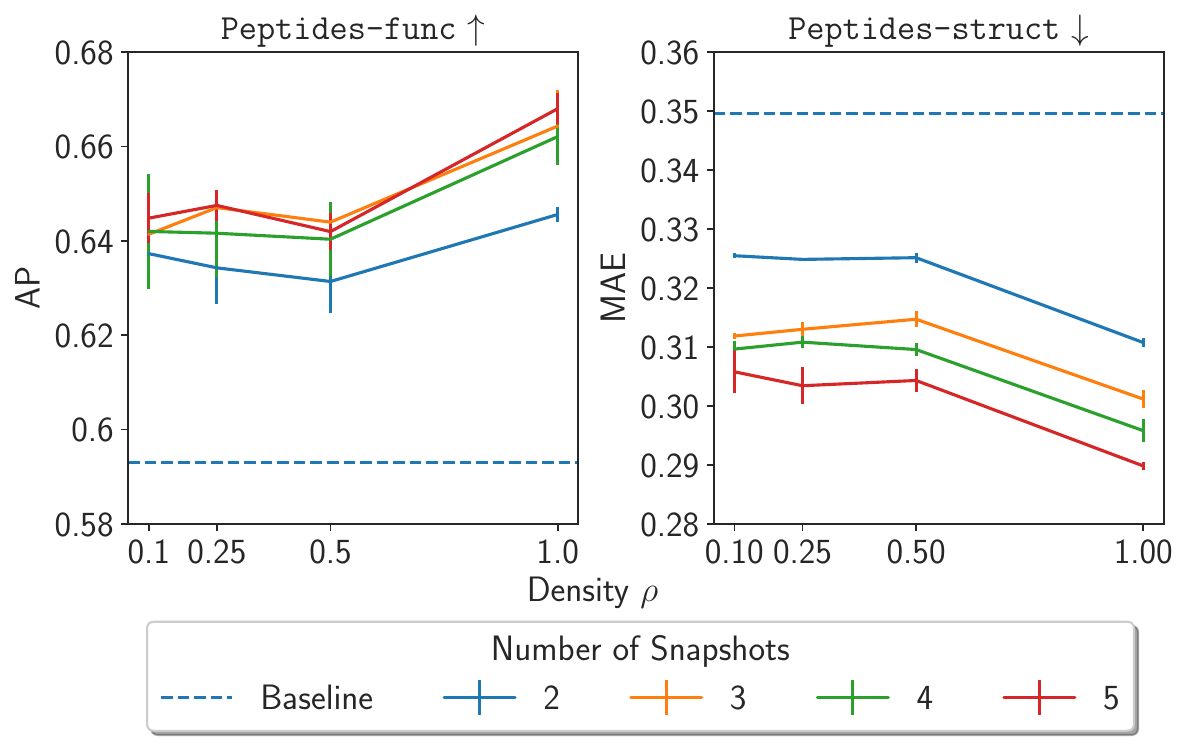}
  \captionof{figure}{Performance on the \texttt{Peptides} tasks when varying the number of snapshots from $2$ to $5$ and $\rho$ from $0.1$ to $1$. The baseline indicates a standard GCN model.}
  \label{fig:snapshots}
\end{minipage}
\end{figure}
\begin{wraptable}{r}{7cm}

\vspace{-4mm}
\caption{Comparison between \LASER and random sampling, with $L=3$ and $\rho = 0.1$.}\vspace{-4mm}
\scalebox{0.75}{
\begin{tabular}{cccc}\\ \toprule
\label{table:random-rewiring-ablation}
Model & \pepfunc $\uparrow$ & \pepstruct $\downarrow$ \\\midrule
Random &0.4796$\pm$0.0067& 0.3382$\pm$0.0019  \\
\midrule
\LASER & \textbf{0.6414}$\pm$0.0020 & \textbf{0.3119}$\pm$0.0005  \\  
\bottomrule
\end{tabular}
}
\vspace{-2mm}
\end{wraptable} 
Next, we investigate question (3), i.e. the impact of the fraction $\rho$ of edges being sampled,  by increasing the number of snapshots from $2$ to $5$ and varying the density $\rho$ ranging $0.1$, $0.25$, $0.5$, and $1$, with results reported in Figure \ref{fig:snapshots}. The majority of the {\em performance gains are obtained through a sparse rewiring}, as even with $\rho = 0.1$ the performance is greatly increased over the baseline. The additional density in the orbits does seem to help with performance, but this comes at the cost of density.

Finally, we address question (4), by evaluating how sampling edges uniformly over the nodes at distance $\ell+1$ given a density $\rho$, compares to our choice of prioritizing edges with lowest connectivity score $\mu$ as per \eqref{eq:mu_k}. We report the results in Table \ref{table:random-rewiring-ablation}. 
  We see that \LASER greatly outperforms the random rewiring, verifying our claim that guiding the rewiring through $\mu$ is a more sound approach.


\paragraph{Scalability.} The operations required to compute $\mu$ and $\nu$ in \LASER are designed to be efficiently implemented on modern hardware accelerators, mostly relying on matrix multiplication. Furthermore, the rewiring operation is done once and stored for future runs. The $\rho$ factor can be tuned to calibrate the density of the rewiring, giving further control on the training efficiency. \LASER does not seem to significantly impact the run-time compared to the standard baseline models and we found through a synthetic benchmarking experiment that our implementation of \LASER \emph{is able to rewire graphs with 100k nodes and a million edges in 2 hours}. This is in contrast to FOSR and SDRF that failed to finish the computation within 24 hours. We report a large number of benchmarking experiments, alongside a theoretical complexity analysis in the Appendix (Section \ref{app:scalability}).


\section{Conclusion}
In this work, we have identified shortcomings of rewiring techniques and argued that a rewiring must: (i) improve connectivity, (ii) respect locality, and (iii) preserve sparsity. Unlike current spectral and spatial rewirings that compromise some of these properties, we have outlined a general rewiring paradigm that meets criteria (i)--(iii) by interpolating between the input graph and a better connected one via locally constrained sequential rewiring. We have then proposed a specific instance of this paradigm --- \LASER --- and verified, both theoretically and empirically, that it satisfies (i)-(iii). 

\paragraph{Limitations and Future Work.} 
In this paper, we considered a simple instance of the general rewiring paradigm outlined in Section \ref{sec:general_framework}, but we believe that an interesting research direction would be to explore alternative choices for both the connectivity and locality measures, ideally incorporating features in a differentiable pipeline similar to \citet{arnaiz2022diffwire}. Furthermore, the identification between graph-rewiring on the one hand, and multi-relational GNNs and temporal-GNNs on the other, could lead to interesting connections between the two settings, both theoretically (e.g., what is the expressive power of a certain rewiring policy?) and practically, where techniques working in one case could be effortlessly transferred to the other. Finally, we highlight that, as is customary in rewiring approaches, it is always hard to pinpoint with certainty the reason for any performance improvement, including whether such an improvement can be truly credited to over-squashing and long-range interactions. We have tried to address this point through multiple ablations studies.

\vspace{-40pt}
\section{Acknowledgements}
FdG, FB, and MB are partially supported by the EPSRC Turing AI World-Leading Research Fellowship No. EP/X040062/1. We would like to thank Google Cloud for kindly providing computational resources for this work.



\bibliography{iclr2024_conference}
\bibliographystyle{iclr2024_conference}

\clearpage
\appendix

\section{Additional details on theory and framework}\label{app:sec:theory}

In this Section, we report additional considerations about our framework and provide proofs for the theoretical results stated in Section \ref{sec:laser_framework}.

First, we report the variant of GCN-\LASER to make things more concrete:
\begin{equation}\label{eq:laser-GCN}
\mathbf{x}_{v}^{(t)} = \mathsf{ReLU}\Big(\sum_{u\in \mathcal{N}_1(v)\cup\{v\}}\frac{1}{\sqrt{d_v d_u}}\mathbf{W}_0^{(t)}\mathbf{x}_{u}^{(t-1)} + \sum_{\ell = 1}^{L}\,\, \sum_{u\in\mathcal{N}^\rho_{\ell+1}(v)}
\frac{1}{\sqrt{d_{v,\ell} d_{u,\ell}}}\mathbf{W}_{\ell}^{(t)} 
\mathbf{x}_u^{(t-1)})\Big),
\end{equation}
where $\mathbf{W}_{\ell}^{(t)}$, for $0\leq \ell \leq L$, are learnable weight matrices, while $d_{i,\ell}$ is equal to the degree induced by the distance matrix associated with the value $\ell$, for each node $i\in\mathsf{V}$. We base most of our evaluation on the model in \eqref{eq:laser-GCN}.


\begin{proposition}
Let $v,u\in\mathsf{V}$ with $d_\gph(v,u) = r$, and assume that there exists a single path of length $r$ connecting $v$ and $u$.
Assume that \textbf{LASER} adds an edge between $v$ and some node $j$ belonging to the path of length $r$ connecting $v$ to $u$, with $d_\gph(v,j) = \ell < r$. Finally, assume for simplicity that all products of weight matrices have unit norm. Then for all $m\leq r$, 
we have
\[
\|\mathbb{E}[\tilde{\mathbf{J}}^{(r-\ell+1)}(v,u)]\|\geq \frac{(d_{\mathrm{min}})^{\ell}}{\sqrt{d_{v,\ell-1}d_{j,\ell-1}}} \|\mathbb{E}[\mathbf{J}^{(m)}(v,u)]\| 
\]
\end{proposition}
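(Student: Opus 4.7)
The plan is to decompose each expected Jacobian through the standard walk expansion and to isolate a single walk in the rewired graph that realizes the improvement promised by LASER. First I would unfold the recursion in \eqref{eq:laser-GCN} to write $\mathbf{J}^{(m)}(v,u)$ as a sum over length-$m$ walks $w$ from $v$ to $u$ of terms of the form $\bigl(\prod_{k} c_k^{(w)}\bigr)\,D^{(m)}W^{(m)}\cdots D^{(1)}W^{(1)}$, where $c_k^{(w)}$ is the normalized-adjacency weight of the $k$-th edge of $w$ (which depends on the relation level the edge lives in) and the $D^{(t)}$ are the diagonal ReLU-derivative masks at the intermediate node; the analogous expansion applies to $\tilde{\mathbf{J}}$ in the rewired graph. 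Taking expectation under the Bernoulli ReLU$'$ assumption with parameter $\rho$ replaces each intermediate mask by $\rho I$, contributing an overall factor $\rho^m$, while the unit-norm-product assumption on weight matrices lets one compare norms of expected Jacobians directly with the corresponding scalar walk sums.

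I would then dispose of the case $m<r$ instantly: since $d_{\gph}(v,u)=r$, no walk of length $m<r$ connects $v$ to $u$, so $\|\mathbb{E}[\mathbf{J}^{(m)}(v,u)]\|=0$ and the bound is vacuous. For $m=r$, the hypothesis of a single length-$r$ path $v\to i_1\to\cdots\to i_{\ell-1}\to j\to i'_1\to\cdots\to i'_{r-\ell-1}\to u$ collapses the walk sum to that single term (no self-loop insertion fits in exactly $r$ steps), yielding
\[
\|\mathbb{E}[\mathbf{J}^{(r)}(v,u)]\|=\rho^{r}\cdot\tfrac{1}{\sqrt{d_v d_u}}\prod_{k=1}^{\ell-1}\tfrac{1}{d_{i_k}}\cdot\tfrac{1}{d_j}\prod_{k=1}^{r-\ell-1}\tfrac{1}{d_{i'_k}}.
\]

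For $\|\mathbb{E}[\tilde{\mathbf{J}}^{(r-\ell+1)}(v,u)]\|$ I would exhibit the specific length-$(r-\ell+1)$ walk in the rewired graph that first jumps from $v$ to $j$ via the newly added relation-$(\ell-1)$ edge and then follows the original path $j\to i'_1\to\cdots\to u$. Because every scalar factor $c_k^{(w)}$ is nonnegative and the unit-norm weight-product assumption makes the matrix parts contribute unit norm, retaining only this walk gives
\[
\|\mathbb{E}[\tilde{\mathbf{J}}^{(r-\ell+1)}(v,u)]\|\geq\rho^{\,r-\ell+1}\cdot\tfrac{1}{\sqrt{d_{v,\ell-1}d_{j,\ell-1}}}\cdot\tfrac{1}{\sqrt{d_j d_u}}\prod_{k=1}^{r-\ell-1}\tfrac{1}{d_{i'_k}}.
\]
Dividing this by the $m=r$ expression cancels the $d_{i'_k}$ and $d_u$ factors and leaves $\rho^{-(\ell-1)}\sqrt{d_v d_j}\prod_{k=1}^{\ell-1}d_{i_k}/\sqrt{d_{v,\ell-1}d_{j,\ell-1}}$; using $\rho\leq 1$, $\sqrt{d_v d_j}\geq d_{\min}$, and $d_{i_k}\geq d_{\min}$ delivers the claimed factor $(d_{\min})^{\ell}/\sqrt{d_{v,\ell-1}d_{j,\ell-1}}$.

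The main obstacle I anticipate is bookkeeping rather than insight: (i) justifying that the single chosen walk truly provides a lower bound on $\|\mathbb{E}[\tilde{\mathbf{J}}^{(r-\ell+1)}(v,u)]\|$, which rests both on nonnegativity of all scalar factors and on the unit-norm weight-product assumption absorbing the directional differences between walks that use distinct relations; and (ii) correctly aligning the relation index $\ell-1$ (and thus the degrees $d_{v,\ell-1},d_{j,\ell-1}$) with the distance $\ell$ from $v$ to $j$, which is offset by one in the \LASER update summation.
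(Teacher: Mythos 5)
Your proposal is correct and follows essentially the same route as the paper's proof: expand the expected Jacobian over walks under the Bernoulli $\mathsf{ReLU}'$ model, dismiss $m<r$ since no walk reaches $u$, collapse the $m=r$ case to the unique path, exhibit the rewired walk through the added edge $(v,j)$, and bound the resulting ratio by $(d_{\mathrm{min}})^{\ell}/\sqrt{d_{v,\ell-1}d_{j,\ell-1}}$. The only deviations are cosmetic: you carry per-layer factors $\rho^m$ where the paper (following the per-path convention of \citet{xu2018representation}) has a single $\rho$ that cancels exactly in the ratio --- harmless, since your leftover $\rho^{1-\ell}\geq 1$ --- and your ``retain one walk'' lower bound is, under the single-path hypothesis, actually an equality, which is how the paper treats it.
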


\begin{proof}
We first note that extending the result to arbitrary weight matrices is trivial, since one would just obtain an extra factor in the lower bound of the form $\omega^{r-\ell + 1}/(\omega')^{m}$ depending on the spectral bounds (i.e. singular values bounds) of the weight matrices entering \LASER-GCN and GCN, respectively.

Following the assumptions, we can argue precisely as in \citet{xu2018representation} and \citet[Section 5]{di2023over} and write the Jacobian of node features after $m$ layers of GCN \citep{kipf2016semi} as 
    \[
    \mathbb{E}[\mathbf{J}^{(m)}(v,u)] = \rho \prod_{k = 1}^{m}\mathbf{W}^{(k)}(\hat{\mathbf{A}}^m)_{vu},
    \]
where $\hat{\mathbf{A}} = \mathbf{D}^{-1/2}\mathbf{A}\mathbf{D}^{-1/2}$ is the symmetrically normalized adjacency matrix. In particular, we can then estimate the norm of the Jacobian matrix simply by:
\[
\|\mathbb{E}[\mathbf{J}^{(m)}(v,u)]\| = \rho \|\prod_{k = 1}^{m}\mathbf{W}^{(k)}\|(\hat{\mathbf{A}}^m)_{vu} = \rho (\hat{\mathbf{A}}^m)_{vu},
\]
where we have used our simplifying assumption on the norm of the weights. In particular, since $d_\gph(v,u) = r$, if $m\leq r$ then the term above vanishes which satisfies the lower bound in the claim. Let us then consider the case $m=r$. We can write the unique path of length $r$ connecting $v$ and $u$ as a tuple $(v,u_1,\ldots,u_{r-1},u)$, so that 
\[
\|\mathbb{E}[\mathbf{J}^{(m)}(v,u)]\| = \rho \frac{1}{\sqrt{d_v d_u}}\prod_{s = 1}^{r-1}\frac{1}{d_s}.
\]
Similarly, given our \LASER-GCN framework in \eqref{eq:laser-GCN} and the assumptions, we can bound the norm of the expected Jacobian as
\[
\|\mathbb{E}[\tilde{\mathbf{J}}^{(r-\ell+1)}(v,u)]\| =  \rho  \|\prod_{k = 2}^{r-\ell+1}\mathbf{W}_0^{(k)}\mathbf{W}_{\ell-1}^{(1)}\|(\hat{\mathbf{A}}_{\ell-1})_{vj}(\hat{\mathbf{A}}^{m-\ell})_{ju} = \rho (\hat{\mathbf{A}}_{\ell-1})_{vj}(\hat{\mathbf{A}}^{m-\ell})_{ju}, 
\]
where $(\hat{\mathbf{A}}_\ell)_{ij} = 1/\sqrt{d_{i,\ell} d_{j,\ell}}$ as defined in \eqref{eq:laser-GCN} if $d_\gph(i,j) = \ell+1$ and zero otherwise. If we now take the ratio of the two expected values, we can bound them from below as
\[
\frac{\|\mathbb{E}[\tilde{\mathbf{J}}^{(r-\ell+1)}(v,u)]\|}{\|\mathbb{E}[\mathbf{J}^{(r)}(v,u)]\|} = \frac{(\hat{\mathbf{A}}_{\ell-1})_{vj}(\hat{\mathbf{A}}^{m-\ell})_{ju}}{\frac{1}{\sqrt{d_v d_u}}\prod_{s = 1}^{r-1}\frac{1}{d_s}} \geq \frac{(\hat{\mathbf{A}}_{\ell-1})_{vj}}{\frac{1}{\sqrt{d_v d_u}}\prod_{s = 1}^{\ell-1}\frac{1}{d_s}}\geq \frac{(d_{\mathrm{min}})^\ell}{\sqrt{d_{v,{\ell-1}} d_{j,{\ell-1}}}},
\]
where we have used that by assumption there must exist only one path of length $m-\ell$ from $j$ to $u$, which has same degrees $\{d_s\}$. 
\end{proof}
\begin{proposition} Let $\mathsf{G}$ be a `lollipop' graph composed of a chain of length $L$ attached to a clique of size $n$ sufficiently large. Consider a spectral rewiring $\mathcal{R}$ which adds an edge between nodes with the highest effective resistance. We can choose the factor $\rho \in (0,1)$ as a function of $L$ so that \LASER with a single snapshot, on average, adds a number of edges that guarantees:
\[
\|\mathcal{D}_\gph - \mathcal{D}_{\mathcal{R}(\gph)} \|_{F} \geq \|\mathcal{D}_\gph - \mathcal{D}_{\LASER} \|_{F}.
\]
\end{proposition}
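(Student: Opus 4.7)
The plan is to exploit the asymmetry between the two rewirings: the spectral method adds a single edge connecting two nodes at graph distance of order $L$, while \LASER, with a single snapshot, only adds edges between nodes that are already at distance exactly $2$. This locality constraint severely limits how much \LASER can distort the distance matrix, whereas the one spectral edge shortens many pairwise distances drastically, and it is this contrast that will give the desired inequality.

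First I would fix notation by writing the chain as $v_1,\dots,v_L$ attached at clique vertex $c_0$, with remaining clique nodes $c_1,\dots,c_{n-1}$. A routine effective-resistance computation --- series resistance $L-1$ along the chain plus $\Theta(1/n)$ inside the clique --- identifies the pair of maximum effective resistance as $(v_1,c_j)$ for some $j\neq 0$, so $\mathcal{R}$ inserts such an edge. The core step is then to lower bound $\|\mathcal{D}_\gph-\mathcal{D}_{\mathcal{R}(\gph)}\|_F^2$: after adding $(v_1,c_j)$, the new distance from $v_i$ to any non-attachment clique node drops from $L-i+2$ to $\min(L-i+2,\,i+1)$, so for each $i\le L/2$ and each of the $n-O(1)$ non-attachment clique nodes the decrease is $L-2i+O(1)$. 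Summing the squares gives a contribution of order $\Omega(nL^3)$ with an explicit leading constant close to $nL^3/6$.

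Next I would upper bound $\|\mathcal{D}_\gph-\mathcal{D}_{\LASER}\|_F^2$ by the worst case where every distance-$2$ pair is rewired. The key structural lemma is that adding only distance-$2$ edges can shorten any $d_\gph(x,y)$ by a factor at most two, since every new edge replaces two steps of the old shortest path by a single step; consequently $d_{\LASER}(x,y)\ge \lceil d_\gph(x,y)/2\rceil$, so the Frobenius squared is bounded above by $\sum_{x,y}\lfloor d_\gph(x,y)/2\rfloor^2$. For the lollipop this sums to $(n-1)L^3/12 + O(L^4)$, which is asymptotically smaller than the spectral lower bound by a factor approaching $2$ as $n/L\to\infty$. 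Choosing $\rho$ as a function of $L$ (for example, small enough that only a vanishing fraction of the chain shortcuts are sampled on average) then tightens the upper bound further and secures the inequality with room to spare.

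The main obstacle is the constant-factor margin, since both bounds scale as $nL^3$: the argument only goes through once the leading constants are controlled. Two technical points feed into this. First, the rounding rule in \LASER (fractional orbits rounded up to one) forces the $n-1$ edges $(v_L,c_j)$ to be added regardless of $\rho$, so these edges must be incorporated into the upper bound rather than suppressed by shrinking $\rho$; one checks directly that each such edge changes exactly $L$ entries of $\mathcal{D}_\gph$ by $1$, contributing only $O(nL)$ to the Frobenius squared. Second, one must argue that the halving bound is essentially tight for the chain--clique pairs that dominate both sides of the comparison, which is where the hypothesis ``$n$ sufficiently large'' is actually used to make the $O(L^4)$ chain--chain term negligible.
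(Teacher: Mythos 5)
Your proof is correct, and it closes the gap by a genuinely different mechanism than the paper's. The paper is cruder on both sides: its spectral lower bound counts only the distance changes from the single chain-end node to the clique, giving $\|\mathcal{D}_\gph-\mathcal{D}_{\mathcal{R}(\gph)}\|_F\geq\sqrt{(n-1)(L-1)^2+(L-2)^2}\approx\sqrt{n}\,L$, and its \LASER upper bound is driven entirely by sparsity --- choosing $\rho=k/(2L)$ so that on average only $k$ shortcuts appear on the chain, each chain--chain distance changes by at most $k$ and each chain--clique distance by at most $k+1$ (after granting the forced edges from the clique to $v_L$, which the paper calls $z'$), yielding $\sqrt{L^2k^2+nL(k+1)^2}$; the comparison then goes through only because $k$ can be taken with $k^2<L/4$, i.e.\ $\rho\leq 1/(2\sqrt{L})$, $L\geq 8$, and $n$ large. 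Your route instead tightens both sides to leading order: summing the spectral changes over \emph{all} chain nodes gives $\approx nL^3/6$ in Frobenius square (versus the paper's $\approx nL^2$), and the halving lemma $d_{\LASER}(x,y)\geq\lceil d_\gph(x,y)/2\rceil$ --- valid for \emph{any} $\rho$, even $\rho=1$ --- gives the \LASER bound $\approx nL^3/12+O(L^4)$, so the factor-two gap between the constants $1/6$ and $1/12$ finishes the argument once $n\gtrsim L$. What this buys is a strictly stronger conclusion: locality alone (distance-$2$ edges can at most halve any distance) protects the distance matrix regardless of sparsity, so the choice of $\rho$ in the statement becomes immaterial. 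The price is that your margin is only a factor of $\sqrt{2}$ in Frobenius norm, so the leading constants must be computed exactly as you indicate, whereas the paper's small-$\rho$ argument has a comfortable margin and needs no constant tracking. Your two technical caveats are also handled soundly: the forced edges arise because each interior clique node's distance-$2$ orbit is the singleton $\{v_L\}$, so the round-up-to-one rule adds all $n-1$ of them (the paper merely assumes this as a worst case), and their $O(nL)$ contribution is negligible; and ``$n$ sufficiently large'' (concretely $n\gtrsim L/4$ suffices) is precisely what kills the $O(L^4)$ chain--chain term.
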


\begin{proof}
Let us denote the end node of the chain by $v$, while $z$ is the node belonging to both the chain and the clique, and $u$ be any node in the interior of the clique.
It is known that the commute time between $v$ and $u$ scales cubically in the total number of nodes \citep{chandra1996electrical}, so an algorithm aimed at minimizing the effective resistance will add an edge between $v$ and a point in the interior of the clique --- which we rename $u$ without loss of generality. Accordingly, the distance between $v$ and any point in the interior has changed by at least $(L+1) - 2$. Besides, the distance between $v$ and $z$  has changed by $L - 2$. We can then derive the lower bound:
\[
\|\mathcal{D}_\gph - \mathcal{D}_{\mathcal{R}(\gph)} \|_{F} \geq \sqrt{(n-1)(L - 1)^2 + (L-2)^2}.
\]
Let us now consider the case of \LASER with a single snapshot. We want to choose $\rho$ sufficiently small so to avoid adding too many edges. In order to do that, let us focus on the chain. Any node in the chain with the exception of the one before $z$, which we call $z'$ (which has the whole clique in its 2-hop), has a 2-hop neighbourhood of size at most $2$. Accordingly, given a number of edges $k$ we wish to be adding, if we choose 
\[
\rho = \frac{k}{2L},
\]
it means that our algorithm, on average, will only add $k$ edges over the chain. To avoid vacuous cases, consider $k \geq 2$. Accordingly, the pairwise distance between any couple of nodes along the chain is changed by at most $k$. For what concerns the clique instead, let us take the worst-case scenario where $z'$ is connected to any node in the clique. Then, the distance between any node in the clique and any node in the chain has changed by $(k+1)$. If we put all together, we have shown that
\[
\|\mathcal{D}_\gph - \mathcal{D}_{\LASER} \|_{F} \leq \sqrt{L^2 k^2 + nL(k+1)^2}.
\]
Therefore, the bound we have claimed holds if and only if
\begin{equation}
    L^2 k^2 + nL(k+1)^2 \leq (n-1)(L - 1)^2 + (L-2)^2.
\end{equation}
One can now manipulate the inequality and find that the bound is satisfied as long as 
\[
n \geq \frac{k^2 L^2 + 2L - 3}{L^2(1 - k^2) - 3L - 2kL +1}
\]
and note that the denominator is always positive if $k^2 < L/4$ and $L \geq 8$. Accordingly, we conclude that if 
\[
\rho \leq \frac{\sqrt{L/4}}{L} = \frac{1}{2\sqrt{L}}
\]
and $L\geq 8$ our claim holds. In particular, \LASER will have added $k\in (0, \sqrt{L}/2)$ edges in total.
\end{proof}


\section{Experimental Details}
\label{appendix:exps}
\paragraph{Reproducibility statement.} We release our code on the following URL \url{https://github.com/Fedzbar/laser-release} under the MIT license. For the additional baselines, we borrowed the implementations provided by the respective authors. We slightly amended the implementation of GTR as it would encounter run-time errors when attempting to invert singular matrices on certain graphs.    

\paragraph{Hyper-parameters.} 
For the LRGB experiments, we use the same hyper-parameters and configurations provided by \citet{dwivedi2020benchmarking}, respecting a $500$k parameter budget in all the experiments.  We lightly manually tune the number of snapshots with values $L \in \{2, 3, 4, 5 \}$ and the density with values $\{1/10, 1/4, 1/2\}$ for \textbf{LASER}. For FOSR, SDRF, and GTR we search the number of iterations from $\{5, 20, 40 \}$, similarly to their respective works. For BORF, as the rewiring is much more expensive, especially on these `larger' datasets, we fix the number of iterations to $1$. We point out that with the implementation provided by the authors, BORF would exceed the $60$ hours limit imposed by \cite{dwivedi2022long} on our hardware for \texttt{PCQM-Contact} and for this reason we assigned it a TIMEOUT value. 

For the TUDatasets experiments, we use ADAM \citep{kingma2017adam} with default settings and use the \texttt{ReduceLROnPlateau} scheduler with a patience of $20$, a starting learning rate of $0.001$, a decay factor of $1/2$, and a minimum learning rate of $1 \times 10^{-5}$. We apply Batch Norm \citep{ioffe2015batch}, use ReLU as an activation function, and fix the hidden dimension to $64$. We do not use dropout, avoid using a node encoder and use a weak (linear) decoder to more accurately compare the various rewiring methods. We lightly manually tune the number of snapshots with values $L \in \{2, 3 \}$ and the density with values $\{1/10, 1/4, 1/2\}$ for \textbf{LASER}. For FOSR, SDRF, and GTR we search the number of iterations from $\{5, 20, 40 \}$, similarly to their respective works. For BORF, we sweep over $\{ 1, 2, 3\}$ iterations. For DiffWire, we search between a normalized or Laplacian derivative and set the number of centers to $5$.    

For both LRGB and the TUDatasets the additional baseline models are also further tuned using either a relational or non-relational GCN. For instance, in the main text we group the results of FOSR and R-FOSR together for clarity. In general, we found relational models to perform better than the non-relational counter-parts. Such a result is consistent with results reported by other rewiring works.

\paragraph{Hardware.} Experiments were ran on $2$ machines with $4\times$ NVIDIA Tesla T4 (16GB) GPU, $16$ core Intel(R) Xeon(R) CPU (2.00GHz), and $40$ GB of RAM, hosted on the Google Cloud Platform (GCP). For the \texttt{PQCM-Contact} experiments we increased the RAM to 80GB and the CPU cores to $30$.

\subsection{Datasets}
\paragraph{LRGB} We consider the \pepstruct, \pepfunc, and \texttt{PCQM-Contact} tasks from the Long Range Graph Benchmark (LRGB) \citep{dwivedi2022long}. \pepfunc ($15\,535$ graphs) is a graph classification task in which the goal is to predict the peptide function out of $10$ classes. The performance is measured in Average Precision (AP). \pepstruct ($15\,535$ graphs) is a graph regression task in which the goal is to predict 3D properties of the peptides with the performance being measured in Mean Absolute Error (MAE). \texttt{PCQM-Contact} ($529\,434$ graphs) is a link-prediction task in which the goal is to predict pairs of distant nodes (when considering graph distance) that are instead close in 3D space. In our experiments, we follow closely the experimental evaluation in \citet{dwivedi2022long}, we fix the number of layers to $5$ and fix the hidden dimension as such to respect the $500$k parameter budget. For \texttt{Peptides} we use a $70\%/15\%/15\%$ train/test/split, while for \texttt{PCQM-Contact} we use a $90\%/5\%/5\%$ split. We train for $500$ epochs on \texttt{Peptides} and for $200$ on \texttt{PCQM-Contact}. We would like to also point out that in work concurrent to ours, \cite{tonshoff2023did} have shown that there are better hyper-parameter configurations than the ones used by \cite{dwivedi2022long} for the LRGB tasks that significantly improve the performance of certain baselines.

\paragraph{TUDatasets} We consider the \texttt{REDDIT-BINARY} ($2\,000$ graphs), \texttt{IMDB-BINARY} ($1\,000$ graphs), \texttt{MUTAG} ($188$ graphs), \texttt{ENZYMES} ($600$ graphs), \texttt{PROTEINS} ($1\,113$ graphs), and \texttt{COLLAB} ($5\,000$ graphs) tasks from TUDatasets \citep{morris2020tudataset}. These datasets were chosen by \citet{karhadkar2022fosr}, under the claim that they require long-range interactions. We train for $100$ epochs over $25$ random seeds with a $80\%/10\%/10\%$ train/val/test split. We fix the number of layers to $4$ and the hidden dimension to $64$ as in \citet{karhadkar2022fosr}. However, unlike \citet{karhadkar2022fosr}, we apply Batch Norm and set dropout to $0\%$ instead of $50\%$. We also avoid using multi-layered encoders and decoders, in order to more faithfully compare the performance of the rewiring techniques. The reported performance is accuracy $\pm$ standard deviation $\sigma$. We note that \cite{karhadkar2022fosr} report as an uncertainty value $\sigma / \sqrt{N}$ with $N$ being the number of folds. As they set $N=100$, they effectively report mean $\pm \sigma / 10$. We instead report simply the standard deviation $\sigma$ as we deemed this to be more commonly used within the community.

\subsection{Discussion on the TUDatasets}
\label{app:tudatasets}
 In Table \ref{tab:tudatasets}, we show an evaluation of \LASER on the \texttt{REDDIT-BINARY}, \texttt{IMDB-BINARY}, \texttt{MUTAG}, \texttt{ENZYMES}, \texttt{PROTEINS}, and \texttt{COLLAB} tasks from the TUDatasets, which were chosen by \citet{karhadkar2022fosr} under the claim that they require long-range interactions. We evaluate on a 80\%/10\%/10\% train/val/test split on $25$ random splits. We fix the hidden dimension for all models to $64$ and the number of layers to $4$ as in \citet{karhadkar2022fosr}, but set dropout rate to $0\%$ instead of $50\%$ as we deem this more appropriate. The goal of the evaluation is to compare the rewiring techniques directly, while the high dropout may complicate a more direct evaluation. 
We train for $100$ epochs. For these experiments, we fix the underlying MPNN to GCN. 

We point out that the datasets chosen in the evaluation of \citet{karhadkar2022fosr} have characteristics that may be deemed problematic. First, many of the datasets contain few graphs. For instance, \texttt{MUTAG} contains $188$ graphs, meaning that only $18$ graphs are part of the test set. Further, \texttt{REDDIT-BINARY}, \texttt{IMDB-BINARY}, and \texttt{COLLAB} do not have node features and are augmented with constant feature vectors. Consequently, is not immediately clear how much long-range interactions play a role in these tasks, or in fact, how to even define over-squashing on graphs without (meaningful) features. For these reasons, we believe the LRGB tasks to be more indicative of the benefits of graph-rewiring to mitigate over-squashing. 

Furthermore, the standard deviation of the reported accuracy is relatively large on some of the benchmarks, especially on the smaller \texttt{MUTAG} and \texttt{ENZYMES} tasks. While we report directly the standard deviation $\sigma$ in our uncertainty quantification, \citet{karhadkar2022fosr} instead report $\sigma/\sqrt{N}$ with $N$ being the number of folds. In \citet{karhadkar2022fosr} they set $N=100$, meaning that effectively they report the standard deviation divided by a factor of $10$. The high uncertainty in these datasets can be also seen in the results from \citet{karhadkar2022fosr}. In particular, an accuracy of $\approx 68.3\%$ on \texttt{MUTAG} is reported using a GCN without rewiring, while an accuracy of $\approx 49.9\%$ is reported on \texttt{MUTAG} with an R-GCN without rewiring. However, the two models should achieve similar accuracy, as the R-GCN without rewiring is equivalent to a standard GCN model as there are no further edge types in \texttt{MUTAG}. 

For these reasons, we believe that while the results of \LASER on the TUDatasets are strong and beat the other baselines, any evaluation done on these tasks should be considered with a degree of caution as the high standard deviation and quality issues of the chosen tasks leave the results possibly less conclusive. Regardless, we retain such an evaluation in our work for completeness of comparison with the benchmarks used by \cite{karhadkar2022fosr}.

\section{Additional Results}
\label{app:sec:additional-results}
In this Section, we provide additional results and ablations. In Section \ref{app:perm-equivariance}, we discuss the orbit sampling procedure that makes \LASER permutation-equivariant in expectation. In Section \ref{app:subsec:density}, we provide a density comparison between \LASER and multi-hop GNNs ($\rho = 1$). In Section \ref{app:different-mpnns}, we show that \LASER is able to work well over a range of popular MPNNs. Finally, in Section \ref{app:motivating-locality}, we give further motivation for the need for locality. 

\subsection{Permutation-Equivariance of \LASER.}
\label{app:perm-equivariance}

When selecting the edges that need to be rewired given the connectivity measure $\mu$, care needs to be given when handling the tie-breaks in order to remain permutation-equivariant. Assume we have to select $k$ nodes from a partially ordered set of size $n > k$, given a reference node $v$. Further assume that nodes from $k' + 1 < k$ to $p > k$ have equivalent connectivity measure, i.e. we are given a sequence of nodes $u_1, \dots, u_n$ such that:
\begin{equation*}
    \mu(v, u_1) \leq \dots \leq \mu(v, u_{k'}) < \mu(v, u_{k' + 1}) =  \dots = \mu(v, u_{p}) \leq \dots \leq \mu(v, u_n).
\end{equation*}

We start by selecting the first $k'$ nodes $u_1,\dots, u_{k'}$ as they are the ones with lowest connectivity measure. Next, we have to select the remaining $k - k'$ nodes from $u_{k' + 1}, \dots, u_p$. If we naively select the nodes $u_{k' + 1}, \dots, u_k$, we would encounter permutation-equivariance issues as we would be relying on an arbitrary ordering of the nodes. Instead, in \LASER we sample uniformly the remaining $k - k'$ from the nodes $k' + 1, \dots, p$ that have equivalent connectivity measure, assuring permutation-equivariance in expectation. Table \ref{table:equivariant-tiebreak} shows that the permutation-equivariant implementation performs better than the naive implementation of simply selecting the first $k$ nodes sorted with respect to $\mu$. In practice this shuffling can be implemented efficiently by sampling a random vector $\mathbf{x} \sim \mathcal{N}(\mathbf{0}, \sigma \mathbf{I})$, with $\sigma$ very small and then adding this to the vector of connectivity measures, i.e. $\hat{\boldsymbol{\mu}} = \boldsymbol{\mu} + \mathbf{x}$. When $\sigma$ is small enough, this has the effect of uniformly breaking the tie-breaks, without breaking the absolute order of the connectivity measures.

\begin{table}
\centering
\caption{Performance on the \texttt{Peptides} tasks given a \LASER implementation that is not permutation-equivariant (choosing top $k$ based on $\mu$ and tie-breaks are chosen based on node id) and one that is equivariant (through sampling uniformly from the tie-breaks). Bold denotes best.}
\scalebox{1}{
\begin{tabular}{lccc}\\ \toprule
\label{table:equivariant-tiebreak}
Rewiring & \pepfunc $\uparrow$ & \pepstruct $\downarrow$ \\\midrule
\LASER - Not Equivariant & 0.6385$\pm$0.0048&0.3162$\pm$0.0032  \\  
\midrule
\LASER &\textbf{0.6447}$\pm$0.0033&\textbf{0.3151}$\pm$0.0006  \\

\bottomrule
\end{tabular}
}
\end{table} 



\subsection{Density Ablation.}\label{app:subsec:density}
Table \ref{table:density} reports the mean added edges per graph (counting undirected edges only once) given $\rho = 0.1$ and $\rho = 1$ on the \texttt{Peptides} graphs. We note that regardless of $\rho$, we always add a minimum of $1$ edge per node orbit. Given that the (molecular) graphs in peptides are very sparse graphs with an average of $~151$ nodes and $~307$ edges, the table highlights a \emph{worst-case scenario} for \LASER as the orbits grow relatively slowly. Having said this, the number of added edges grows at a significantly lower rate given $\rho = 0.1$. This showcases the use of $\rho$ to control the density of the graphs. The minimum number of node rewirings added being set to $1$ is a design choice and this can also be further tuned to control the density, if desired.

\begin{table}
\centering
\caption{Mean added edges per graph given a \LASER rewiring with a density of $\rho = 0.1$ and $\rho = 1$ on the \texttt{Peptides} dataset.}
\scalebox{1}{
\begin{tabular}[h]{lcccc}\\ \toprule
\label{table:density}
Density & $2$ Snapshots & $3$ Snapshots & $4$ Snapshots & $5$ Snapshots \\\midrule
\LASER $\rho = 0.1$ & 148.9 & 296.5 & 442.4 & 587.2
 \\
\LASER $\rho = 1$ & 205.8 & 434.7 & 691.6 & 986.1
 \\
\bottomrule
\end{tabular}
}
\end{table}

\subsection{Performance with Different Underlying MPNNs}
\label{app:different-mpnns}
We run experiments to evaluate the performance of \LASER operating over different MPNNs. We evaluate on popular MPNN models: Graph Convolution Networks (GCNs) \citep{kipf2016semi}, Graph Isomorphism Networks (GINs) \citep{xu2018how}, GAT \citet{velivckovic2017graph}, and GraphSAGE (SAGE) \cite{hamilton2017inductive}. Table \ref{tab:convolutions} shows the results on \pepfunc and \pepstruct obtained by varying the underlying MPNN. We see that \LASER improves the baseline MPNN performance consistently reaching best or near best performance. This is not the case for FOSR and SDRF that often end up harming the performance of the baseline MPNN, even when using a relational model. This experiment provides evidence supporting the fact \LASER is able to function well regardless of the underlying convolution being considered. 

\begin{table}[h!]

\caption{Results for \pepfunc and \pepstruct. Performances are Average Precision (AP) (higher is better) and Mean Absolute Error (MAE) (lower is better), respectively. Results in \textbf{bold} denote the best result for each MPNN.}
    \centering
\begin{tabular}{l c c}\toprule
\label{tab:convolutions}
    \multirow{2}{*}{\textbf{Model}} &\multicolumn{1}{c}{\pepfunc} &\multicolumn{1}{c}{\pepstruct} \\ \cmidrule(lr){2-2} \cmidrule(lr){3-3}
    &\textbf{Test AP $\uparrow$} &\textbf{Test MAE $\downarrow$} \\
    \midrule
    GCN &0.5930$\pm$0.0023 & 0.3496$\pm$0.0013 \\
    GCN-FOSR-R &0.4629$\pm$0.0071 &0.3078$\pm$0.0026 \\
    GCN-SDRF-R &0.5851$\pm$0.0033& 0.3404$\pm$0.0015\\
    GCN-\LASER &\textbf{0.6440}$\pm$0.0010& \textbf{0.3043}$\pm$0.0019\\
    \midrule
    GIN &0.5799$\pm$0.0006 & 0.3493$\pm$0.0007 \\
    GIN-FOSR-R &0.4864$\pm$0.0054 &\textbf{0.2966}$\pm$0.0024 \\
    GIN-SDRF-R &0.6131$\pm$0.0084& 0.3394$\pm$0.0012\\ 
    GIN-\LASER & \textbf{0.6489}$\pm$0.0074 & 0.3078$\pm$0.0026\\
    \midrule 
    GAT &0.5800$\pm$0.0061 & 0.3506$\pm$0.0011 \\
    GAT-FOSR-R &0.4515$\pm$0.0044 &0.3074$\pm$0.0029 \\
    GAT-SDRF-R &0.5756$\pm$0.0037& 0.3422$\pm$0.0008\\ 
    GAT-\LASER &\textbf{0.6271}$\pm$0.0052 & \textbf{0.2971}$\pm$0.0037\\
    \midrule
    SAGE &0.5971$\pm$0.0041 & 0.3480$\pm$0.0007 \\
    SAGE-FOSR-R &0.4678$\pm$0.0068 &\textbf{0.2986}$\pm$0.0013 \\
    SAGE-SDRF-R &0.5892$\pm$0.0040& 0.3408$\pm$0.0011\\ 
    SAGE-\LASER &\textbf{0.6464}$\pm$0.0032 & 0.3004$\pm$0.0032\\
    \bottomrule
    \end{tabular}   
\end{table}

\subsection{Motivating Locality}
\label{app:motivating-locality}
In this section, we motivate the desire for a rewiring technique to respect the locality of the graph. Preserving locality is a natural inductive bias to have whenever we assume that the graph-structure associated with the data is aligned with the downstream task.
For instance, molecular systems observe long-range interactions that decay with the distance, in the form of Coulomb electrostatic forces. This  behaviour also naturally appears in social networks, transaction networks, or more generally in physical systems, in which interactions that are nearby are more likely to be important for a given task. Accordingly, given a budget of edges to be added, it is sensible to prioritise adding connections between nodes that are closer.

Through a spectral rewiring, one is able to efficiently improve the information flow, but this often greatly modifies the information given by the topology of the graph, as shown in Figure \ref{fig:peptides-ablation}. This may be beneficial in tasks in which mixing all information quickly is important, but GNNs usually do not operate under such conditions. In fact, in tasks in which locality `should' be preserved, spectral rewirings tend to perform poorly, as shown in Table \ref{tab:peptides-exps}. Furthermore, it is unclear to what extent spectral graph-rewirings are able to deal with generalization to variying graph sizes. For instance, a spectral rewiring on a very large peptide chain will connect the most distant parts of the peptide, drastically reducing spatial quantities such as diameter. Instead, on a small molecule this change in topology would be comparably much more tame. On the other hand, local rewiring techniques naturally generalize to different graph sizes as the mechanism via which they alter the topology is much more consistent, being a local procedure.

Figure \ref{fig:local-global-ablation} further motivates the need for locality in \LASER and supports the claim that locality is the main source of improvement, rather than the more expressive architecture capable of handling a sequence of snapshots. In the experiment, we compare \LASER to a version of FOSR in which each snapshot contains $10$ rewirings. We observe that for \LASER  a larger quantity of snapshots seems to benefit the performance. Instead, with FOSR there is a slight degrade in performance. Regardless, even with a growing number of snapshots, FOSR is not able to compete with \LASER. This supports the claim that even within our snapshot framework, it is important for the snapshots to remain local, rather than acting globally as done in FOSR. 

\begin{figure}
    \centering
    \includegraphics[width=0.55\linewidth]{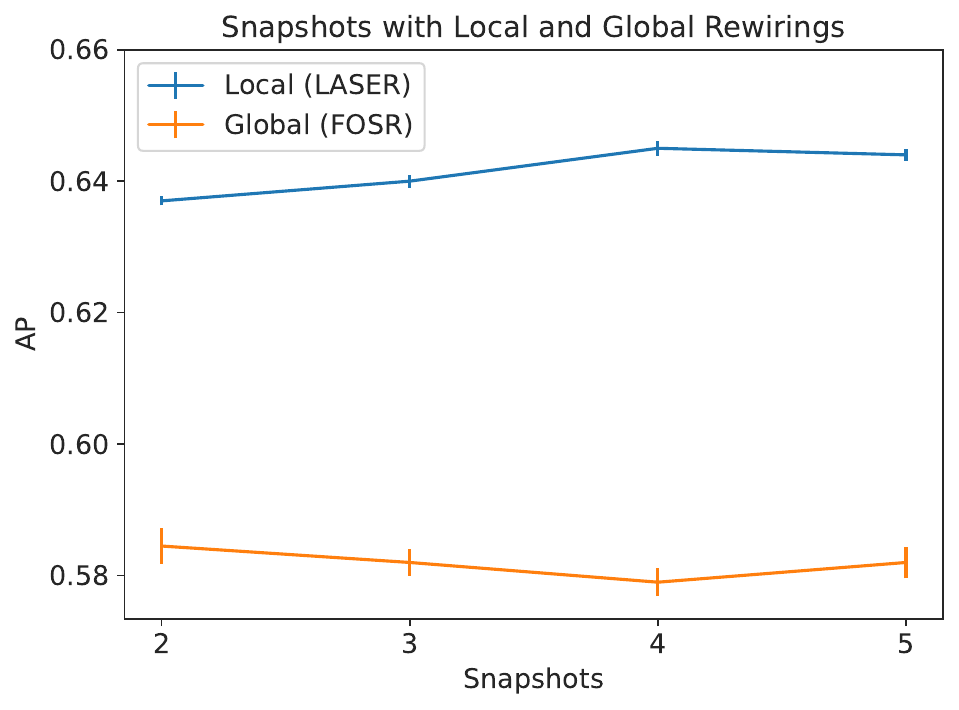}
    \caption{Average Precision (AP) on \pepfunc obtained by varying the number of snapshots for \LASER and FOSR, respecting the 500k parameter budget. For \LASER, we fix $\rho = 0.1$. For FOSR, each snapshot contains $10$ iterations.}
    \label{fig:local-global-ablation}
\end{figure}

\section{Scalability and Implementation}
\label{app:scalability}

In this section, we provide theoretical and practical insights on the scalability of \LASER to large graphs. We start by providing a theoretical complexity analysis, followed by benchmarking pre-processing and training times for the various rewiring techniques under comparison on our hardware. We find that the \LASER rewiring procedure is able to scale to synthetically generated graphs with $100$k nodes and a million edges. We further find that the densification due to the rewiring does not significantly impact the total training and inference time when compared to the underlying MPNN without rewirings on real-world tasks. 

\subsection{Complexity Analysis} 
The computations in \LASER during the pre-processing time are achieved in two steps: (1) the connectivity measure $\mu$ and locality measure $\nu$ are computed for the graph $\gph$ and (2) $\mu$ is used to guide the rewiring of the edges given the density factor $\rho$ and constrained by the locality measure $\nu$. Algorithm $\ref{alg:conn-and-locality}$ describes the computation done in step (1) and Algorithm $\ref{alg:laser}$ the computation in step (2).

\begin{algorithm}[H]
	\begin{algorithmic}[1]
		\Statex{\textbf{Input}: Adjacency matrix $\Ab$, Number of snapshots $L$, Connectivity radius $C = 8$}
		\Statex{\textbf{Output}: Locality matrix $\mathbf{D}$, Connectivity matrix $\mathbf{M}$}
		\State{$\mathbf{D} \gets \Ab$}
            \State{$\Ab_{curr} \gets \Ab$}
            \State{$\Ab_{next} \gets \Ab^2$}		
            \For{$r = 2, \cdots, L$}
            \State{$\Ab_{curr} \gets \text{clip}(\Ab_{curr})$}
            \State{$\Ab_{next} \gets \text{clip}(\Ab_{next})$}
            \State{$\mathbf{D} \gets \mathbf{D} + r( \Ab_{next} - \Ab_{curr})$} \Comment{Adds nodes at distance $r$ to $\mathbf{D}$.}
            \State{$\Ab_{curr} \gets \Ab_{next}$} \Comment{Setup next iteration.}
            \State{$\Ab_{next} \gets \Ab_{next}\Ab$}	
		\EndFor

        \State{$\mathbf{M} \gets \Ab^C$}
        \State{\Return $\mathbf{D}, \mathbf{M}$}
	\end{algorithmic}
	\caption{\textbf{Fast $\mu, \nu$ Computation}}
	\label{alg:conn-and-locality}
\end{algorithm}

While the implementation depends on the choices for $\mu$ and $\nu$ in step (1), our specific choices are particularly efficient. Both of our measures can be in fact computed with matrix multiplication and other simple matrix operations that are highly efficient on modern hardware. For square matrices of size $n$, we set the cost of matrix multiplication to $O(n^3)$ in our analysis for simplicity although there are more efficient procedures for this operation. 
In particular the computational complexity -- assuming the number of snapshots $L$ to be sufficiently small which is the case in practice -- of $\mu$ and $\nu$ is $O(n^3)$ with $n$ being the number of nodes considered, while the memory complexity is $O(n^2)$. We further note that while computing effective resistance also has a theoretical complexity of $O(n^3)$, it involves (pseudo-)inverting the graph Laplacian which in practice has a significant prefactor when compared to matrix multiplication and often runs into numerical stability issues. 

The computations in Algorithm \ref{alg:laser} are similarly efficient and easily parallelizable as the computation for each node is independent. We find that the computational complexity is $O(n^2)$ as each node-wise calculation is $O(n)$ and there are $n$ such operations. Overall, the entire procedure of $\LASER$ therefore has a cost of $O(n^3)$, with the primary overhead being the cost of taking matrix powers of the adjacency matrix $\Ab$. In these calculations we are absorbing the costs due to the number of snapshots $L$ as a prefactor, which is a reasonable assumption as $L$ is constant, small, and independent of $n$. As shown in the benchmarking experiments, in practice the cubic cost shows very strong scalability as the operations required in our computations are highly optimized on modern hardware accelerators and software libraries.

\begin{algorithm}[H]
	\begin{algorithmic}[1]
		\Statex{\textbf{Input}: Graph $\gph$, Locality matrix $\mathbf{D}$, Connectivity matrix $\mathbf{M}$, Locality value $r$, Density $\rho$}
		\Statex{\textbf{Output}: Rewired edges $\E'_r$ for locality $r$}
            \State{$\E'_r \gets \text{empty}()$}
            \For{node in $\gph$}
            \State{$\mathbf{D}_{node}, \mathbf{M}_{node} \gets \mathbf{D}[node, :], \mathbf{M}[node, :]$}
            \State{$\mathbf{M}_{node} \gets \mathbf{M}_{node}[\mathbf{D}_{node} = r]$} \Comment{Consider only the nodes at locality value $r$.}
            \State{$k = \text{round}( \rho |\mathbf{M}_{node}| )$} \Comment{Edges to add $k$ are the fraction $\rho$ of the orbit size.}
            \State{\text{$\E_r^{node} \gets \text{getNewEdgesRandomEquivariant}(\mathbf{M}_{node}, k)$}}
            \State{$\E'_r.\text{addEdges}(\E_r^{node})$}
            \EndFor
        \State \Return $\E'_r$
	\end{algorithmic}
	\caption{\textbf{\LASER rewiring for locality value $r$}}
	\label{alg:laser}
\end{algorithm}

\subsection{Real-world Graphs Scaling}
In Table \ref{tab:scalability_rewiring}, we show rewiring pre-processing and training times on \texttt{PCQM-Contact} and pre-processing times on \texttt{PascalVOC-SP} datasets from LRGB. \texttt{PCQM-Contact} has more than $500$k graphs with an average node degree of $\approx 30$ per graph. We find that the rewiring and inference times between FOSR, SDRF, and \LASER are very similar. \texttt{PascalVOC-SP} is a dataset with $\approx 11$k graphs, but with a considerably larger average node degree of $\approx 480$. We find that \LASER is still extremely efficient, especially when compared to SDRF, with FOSR being the fastest. We further remark that the FOSR and SDRF implementations rely on custom CUDA kernel implementations with Numba to accelerate the computations. We specifically avoided such optimizations in order to preserve the clarity of the implementation and found the performance of \LASER to be efficient enough without such optimizations regardless. We envision that a custom CUDA kernel implementation of \LASER could be used for further speedups. 

In Table \ref{tab:sage-scaling}, we show the training time impact of the various rewiring techniques with SAGE as the underlying MPNN on the \pepfunc task. We find that the rewiring techniques overall add small overhead to the SAGE implementation. In particular, \LASER with $L=1$ has the same training time of FOSR and SDRF but achieves significantly higher performance. 
\begin{table}[h!]

\caption{Rewiring and Training+Inference times for the \texttt{PCQM-Contact} and 
\label{tab:scalability_rewiring}
\texttt{PascalVOC-SP} datasets.}
    \centering
\begin{tabular}{l l l l}\toprule
    Dataset & FOSR & SDRF & \LASER \\ \midrule
    PCQM Training+Inference Time (GIN) & 11h 47m & 11h 51m	& 12h 21m \\
    PCQM Rewiring Time & 5m 20s & 5m 36s & 4m 15s \\
    PascalVOC-SP Rewiring Time&	4m 3s&31m 30s & 9m 6s\\ 
    \bottomrule
    \end{tabular}   
\end{table}

\begin{table}[h!]
\caption{Run-time on  \pepfunc with different rewirings and SAGE. For FOSR and SDRF we set the number of iterations to $40$ and for \LASER we set $\rho=0.5$. $L$ denotes the number of rewirings.}
\label{tab:sage-scaling}
    \centering
\begin{tabular}{l l l l l c}\toprule
    Model & Training Time & Rewiring Time & $L$ & Parameters & Test AP $\uparrow$\\ \midrule
    SAGE & 46m 26s & N/A & 0 & 482k & 0.5971$\pm$0.0041 \\
    SAGE-FOSR-R & 55m 51s & 35s & 1 & 494k & 0.4678$\pm$0.0068 \\
    SAGE-SDRF-R & 54m 52s & 21s & 1 & 494k & 0.5892$\pm$0.0040\\ 
    \midrule
    SAGE-\LASER & 54m 50s & 33s & 1 & 494k &  0.6442$\pm$0.0028\\
    SAGE-\LASER & 1h 4m 28s & 42s &3 & 495k &\textbf{0.6464}$\pm$0.0032\\
    \bottomrule
    \end{tabular}   
\end{table}


\subsection{Synthetic Scaling Experiment} 
To further benchmark the scalability of the rewiring time, we construct an Erdos-Renyi synthetic benchmark with $n$ nodes and Bernoulli probability $p=10/n$, i.e. in expectation we have $10n$ edges. In these synthetic experiments, we benchmark on a machine with 44 cores and 600GB of RAM. Setting $n=10$k (meaning $\approx 100k$ edges), \LASER ($\rho=0.5, L=1$) takes 11s, while FOSR and SDRF with $0.001n = 100$ iterations take 45s and 5m 48s respectively. On $n=100$k (meaning $\approx 1$ million edges), \LASER completes the computation in 2h 16m, while FOSR and SDRF do not terminate after more than 24 hours of computation. We emphasize that these graphs are much larger than what graph rewiring techniques are designed for, yet \LASER is still able to handle them with success due to its efficient design.

Overall, in these synthetic experiments, we found that \LASER is able to scale to very large graphs and with a reasonable run-time on modest hardware. Scaling to larger graphs with  millions of nodes would likely require some further form of sampling, with this being a common practice used to scale GNNs to very large graphs \citep{hamilton2017inductive}.


\end{document}